\documentclass{article}

\usepackage{microtype}
\usepackage{graphicx}
\usepackage{subcaption}
\usepackage{booktabs} 
\usepackage{enumitem}
\usepackage{multirow}
\usepackage{fontawesome5} 
\usepackage{courier}

\usepackage{hyperref}

\newcommand{\email}[1]{\href{mailto:#1}{\textcolor{black}{\faEnvelope} \ \texttt{#1}}}

\usepackage[preprint]{icml2026}

\usepackage{amsmath}
\usepackage{amssymb}
\usepackage{mathtools}
\usepackage{amsthm}

\usepackage[capitalize,noabbrev]{cleveref}

\theoremstyle{plain}
\newtheorem{theorem}{Theorem}[section]

\newtheorem{lemma}[theorem]{Lemma}

\theoremstyle{definition}

\theoremstyle{remark}

\usepackage[textsize=tiny]{todonotes}

\definecolor{mwpblue}{HTML}{0171C8}
\definecolor{mwpred}{HTML}{EF5544}
\definecolor{mwpgreen}{HTML}{00A676}
\definecolor{posdelta}{RGB}{0,128,0} 

\icmltitlerunning{Do Latent-CoT Models Think Step-by-Step? A Mechanistic Study on Sequential Reasoning Tasks}

\begin{document}

\twocolumn[
  \icmltitle{Do Latent-CoT Models Think Step-by-Step? \\ A Mechanistic Study on Sequential Reasoning Tasks}



  \icmlsetsymbol{equal}{*}

  \begin{icmlauthorlist}
    \icmlauthor{Jia Liang}{stanford}
    \icmlauthor{Liangming Pan}{pku,baai} \\
    \vskip 0.1in
    \email{jialiang2015@gmail.com} \quad 
    \href{https://github.com/jialiang19/latent-cot-thinking}{\textcolor{black}{\faGithub} \ \texttt{Latent-CoT-Thinking}}
  \end{icmlauthorlist}

  \icmlaffiliation{stanford}{Institute for Computational and Mathematical Engineering, Stanford University, Stanford, USA}
  \icmlaffiliation{pku}{MOE Key Lab of Computational Linguistics, Peking University, Beijing, China}
  \icmlaffiliation{baai}{Beijing Academy of Artificial Intelligence, Beijing, China}

  \icmlcorrespondingauthor{Liangming Pan}{liangmingpan@pku.edu.cn}

  \icmlkeywords{Machine Learning, ICML}

  \vskip 0.3in
]



\printAffiliationsAndNotice{}  

\begin{abstract}
Latent Chain-of-Thought (\textit{Latent-CoT}) aims to enable step-by-step computation without emitting long rationales, yet its mechanisms remain unclear. We study CODI, a continuous-thought teacher--student distillation model, on strictly sequential polynomial-iteration tasks. Using logit-lens decoding, linear probes, attention analysis, and activation patching, we localize intermediate-state representations and trace their routing to the final readout. On two- and three-hop tasks, CODI forms the full set of bridge states that become decodable across latent-thought positions, while the final input follows a separate near-direct route; predictions arise via late fusion at the end-of-thought boundary. For longer hop lengths, CODI does not reliably execute a full latent rollout, instead exhibiting a \textit{partial latent reasoning path} that concentrates on late intermediates and fuses them with the last input at the answer readout position. Ablations show that this partial pathway can collapse under regime shifts, including harder optimization. Overall, we delineate when CODI-style latent-CoT yields faithful iterative computation versus compressed or shortcut strategies, and highlight challenges in designing robust latent-CoT objectives for sequential reasoning. 
\end{abstract}

\section{Introduction}


Recent \textit{Large Language Models} (LLMs) demonstrate substantial competence on multi-step reasoning tasks, producing correct outputs by integrating information across a sequence of intermediate deductions \cite{jaech2024openai, guo2025deepseek}. These capabilities are typically realized through two paradigms. In \emph{implicit reasoning}, the model carries out multi-hop inference internally within its hidden activations, without emitting intermediate steps. In \emph{explicit reasoning}, the model verbalizes intermediate computations as discrete tokens, most commonly in the form of \textit{Chain-of-Thought} (CoT) \cite{wei2022chain} traces. 

Recently, a third direction—\textit{latent CoT reasoning}~\cite{zhu2025survey, deng2024explicit, hao2024training, shen2025codi}—has begun to bridge these paradigms. Rather than relying solely on the fixed computational depth of a standard transformer (as in typical implicit reasoning), latent CoT architectures add extra internal compute through mechanisms such as continuous ``thought tokens'', iterative refinement, or recurrent state updates. These behaviors are often learned by distilling explicit CoT traces into hidden-state trajectories. In principle, latent CoT offers the benefits of explicit step-by-step computation—greater effective depth and expressivity—while avoiding the decoding overhead and brittleness of generating long natural-language rationales.

However, since latent CoT obscures the reasoning process within high-dimensional continuous vectors, this opacity creates a critical gap in our understanding: without the window of interpretability provided by text, it is difficult to verify whether the model is genuinely reasoning or merely employing sophisticated heuristics. While most mechanistic interpretability work has focused on either explicit traces or implicit reasoning \cite{cabannes2024iteration, zhang2025finite, biran2024hopping, wang2024grokked}, relatively few studies directly investigate the mechanisms of latent chain-of-thought. Yet understanding these mechanisms is crucial: accuracy alone cannot distinguish multi-step computation from shortcuts, nor can it reveal when extra latent compute successfully increases compositional generalization. Many core mechanistic questions remain, such as when a latent-CoT model forms true intermediate states, where they are represented, and how they are stored, propagated, and used to produce the final answer.

\begin{figure*}[ht]
  \centering
  \includegraphics[width=0.9\linewidth]{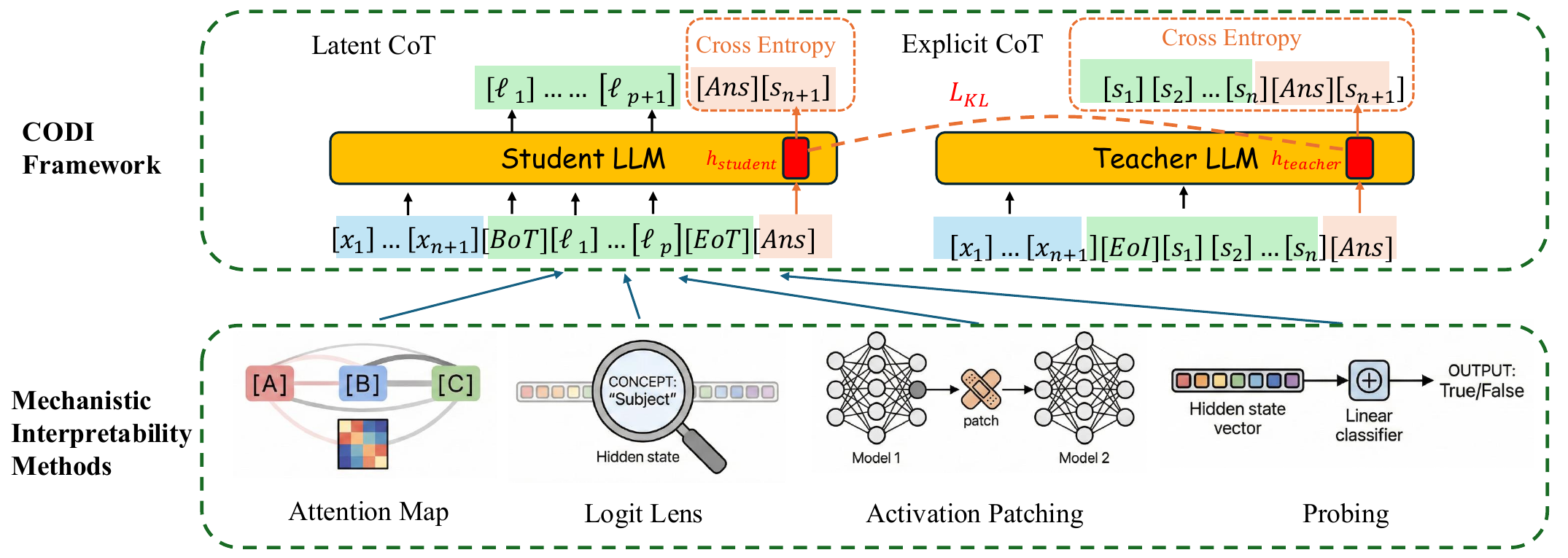}
  \caption{\textbf{Mechanistic study of CODI on sequential reasoning tasks.} \emph{Top:} the CODI training setup used for our polynomial iteration task. \emph{Bottom:} the four mechanistic interpretability methods we use to analyze the student model’s internal computations.}
  \label{fig:main_figure}
\end{figure*} 

In this paper, we give a mechanistic account of how latent CoT solves sequential multi-hop algorithmic tasks when computation is routed through a latent ``thought'' channel. Using logit-lens \cite{nostalgebraist2020logitlens} decoding, linear probes \cite{alain2016understanding}, attention analysis \cite{vaswani2017attention}, and targeted activation patching \cite{meng2022locating}, we test whether distilling explicit CoT traces into hidden states actually leads the model to internalize genuine step-by-step reasoning, or if it instead relies on shortcuts and late-stage fusion.

Specifically, we use the polynomial sequential tasks introduced by \citet{cabannes2024iteration} as a controlled testbed to probe latent CoT mechanisms in \textit{Continuous Chain-of-Thought via Self-Distillation (CODI)}~\cite{shen2025codi}, a typical latent CoT model that uses teacher–student distillation to learn latent reasoning (\cref{fig:main_figure}). Polynomial sequential tasks are an ideal testbed for studying CODI because they enforce a transparent, strictly sequential state update with ground-truth intermediates, letting us precisely probe whether and where CODI represents and propagates step-by-step computation within its latent thought channel. 

Empirically, our mechanistic analyses reveal a consistent two-route computation in CODI: on 2--3 hop tasks, the latent channel constructs intermediate ``bridge'' states, while the final input is often delivered to the answer readout via a direct, copy-like pathway. As depth increases ($n \ge 4$), CODI rarely realizes a full latent rollout; instead, the latent channel collapses into a partial-reasoning, late-bottleneck trace that retains only the final one or two intermediates.

Our polynomial-iteration tasks operate over integers modulo $m$. We also observe a sharp prime--composite split: these mechanistic signatures persist across many composite moduli but largely vanish for prime moduli, where accuracy drops and intermediate-state decodability disappears. This empirical discontinuity motivates our theoretical explanation via \emph{compressibility}: in composite rings, some update steps are inherently many-to-one, which can erase information about early inputs and bias the final answer toward a short terminal suffix, making late-bottleneck strategies viable. In contrast, under prime moduli the updates are permutations for nonzero multipliers, so the final answer typically retains genuine dependence on the full history, limiting the benefits of teacher-guided compression and destabilizing step-by-step latent traces.

Comparisons to standard non-CoT transformers and targeted loss ablations suggest that \emph{teacher-guided compression}—distilling explicit-CoT supervision into a short latent trace—drives the partial-reasoning, late-bottleneck strategy under composite moduli. More broadly, relative to fully explicit CoT, latent CoT appears most effective when the underlying computation is \emph{compressible}---as in the composite-modulus regime, where many-to-one contractions reduce the informativeness of early history. This split also highlights a limitation: when updates preserve full-history dependence (as under prime moduli), latent rollouts (step-by-step latent updates) often fail to stabilize.

\section{Related Work}
To understand how LLMs solve multi-step reasoning problems, a growing line of work applies mechanistic interpretability.
In the implicit-reasoning regime, recent studies argue that layers assume distinct computational roles during multi-hop inference \cite{biran2024hopping, li2024understanding, yu2025back, yang2025internal} and identify fine-grained internal structures that support reasoning \cite{hou2023towards, brinkmann2024mechanistic}.
Other work reports sharp training-phase transitions in which reasoning-like behavior emerges abruptly rather than gradually \cite{wang2024grokked, ye2025transformers, zhang2025complexity}.
Despite this progress, implicit reasoning can be brittle \cite{biran2024hopping, li2024understanding} and susceptible to shortcut solutions \cite{ju2024investigating, yang2025large}.
Several works further suggest that limited depth is a primary bottleneck, clarifying when and why reasoning fails \cite{merrill2023expressive, yu2024llms, guo2025llms, saunshi2025reasoning}.

On the explicit reasoning side, \cite{cabannes2024iteration} and \cite{dutta2024think} identify attention heads that reuse prior CoT tokens to propagate intermediate results, effectively leveraging generated text as an external memory, while complementary evidence from \cite{zhang2025finite} and \cite{rai2024investigation} indicates that CoT models also maintain and update internal state via circuits or neuron activations that encode intermediate variables. Moreover, many more works are investigating \emph{when} \cite{sprague2024cot, suzgun2023challenging} and \emph{why} \cite{li2023dissecting, yang2025chain} CoT enhance reasoning abilities, as well as its faithfulness \cite{kudo2024think, arcuschin2503chain}.

While prior work shows that latent-CoT representations can be steered or decoded \cite{zhang2024uncovering, wang2024latent}, these results are primarily correlational and do not directly reveal the underlying mechanism. Building on \emph{COCONUT}~\cite{hao2024training}, \citet{zhu2025reasoning} identify conditions—such as representation superposition—under which latent CoT can outperform explicit CoT on graph-style reasoning. In contrast, we provide a mechanistic account of a different latent-CoT model, \emph{CODI}. Using strictly sequential tasks with ground-truth intermediates, we apply interpretability tools and causal interventions to localize intermediate-state representations and show that CODI-style latent CoT can fail to sustain step-by-step reasoning under specific regimes.

\section{Approach}

We train CODI on polynomial-iteration tasks with an explicit-CoT teacher and a latent-thought student, using feature-space distillation at the pre-answer \texttt{[Ans]} boundary to align internal states without emitting text. We then analyze the student’s computation with mechanistic interpretability tools to localize intermediate-state representations and distinguish iterative latent updates from shortcut routing (see \cref{fig:main_figure}).

\textbf{The CODI Framework.} 
\textit{Continuous Chain-of-Thought via Self-Distillation (CODI)} is a single-stage framework that compresses an explicit CoT into a short sequence of continuous/latent ``thought'' vectors while retaining CoT-level accuracy. It trains two modes within the same model: a \emph{teacher} that generates an explicit CoT trace (supervised with cross-entropy on the CoT steps and the final answer), and a \emph{student} that produces a fixed number of latent thought vectors between learned \texttt{[BoT]}/\texttt{[EoT]} (beginning/end-of-thought) markers and is trained with cross-entropy on the final answer. The key supervision is \emph{feature-space self-distillation}: instead of matching a textual rationale, CODI aligns teacher and student hidden representations at a designated pre-answer boundary using an $\ell_1$ loss with stop-gradient on the teacher, so the student learns to reproduce the teacher's CoT-induced internal state shift without emitting the CoT.


\textbf{Polynomial-Iteration Dataset.}
We adopt the polynomial-iteration task from the Iteration Head work~\cite{cabannes2024iteration} as a controlled testbed for mechanistic analysis.
For an $n$-hop task, we sample inputs $x_1,\ldots,x_{n+1}\in\mathbb{Z}_m$ and generate states $s_1,\ldots,s_{n+1}$ by $s_1=x_1$ and
\begin{equation}
s_t \;=\; s_{t-1}x_t + b \pmod m,\qquad t=2,\ldots,n+1,
\end{equation}
with $m=50$ and $b=1$ by default.
To instantiate CODI on this task, we train the \emph{teacher} to emit an explicit state-by-state trace and introduce an explicit pre-answer boundary token \texttt{[Ans]} immediately before the final state.
The \emph{student} follows CODI's latent-thought protocol and predicts the final answer after this boundary; the exact teacher/student serializations and the distillation point are shown in \Cref{fig:seq_figure}.
Training the \emph{student} uses a cross-entropy loss on $s_{n+1}$, the final answer, and a CODI-style feature distillation loss matching teacher and student representations at \texttt{[Ans]} (stop-gradient on the teacher).

\begin{figure}[ht]
  \centering
  \includegraphics[width=\linewidth]{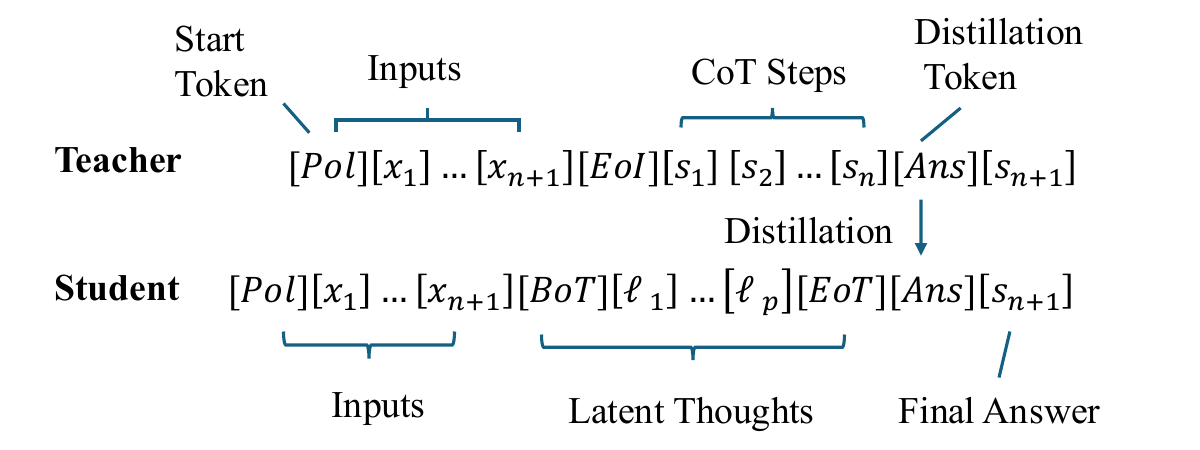}
  \caption{\textbf{Polynomial tasks for training CODI.} The teacher is trained to generate an explicit CoT trace, while the student answers after generating a fixed-length latent-thought trajectory. A feature-space self-distillation loss aligns the teacher and student representations at the answer readout position \texttt{[Ans]}.
}
  \label{fig:seq_figure}
\end{figure}

\textbf{Mechanistic Interpretability Methods.}
We use four complementary tools to characterize and localize latent computation in CODI; full method descriptions and implementation details are provided in the referenced appendices. \emph{Logit lens} decodes residual-stream activations (via the unembedding) into a distribution over task states; we apply it across latent positions to track when true intermediates become decodable (Appendix~\ref{sec:logit_lens}). \emph{Attention maps} visualize how information is routed across input, latent-thought, and answer positions;  we use them to test whether attention implements step-indexed retrieval through the latent tokens (Appendix~\ref{app:attention_maps}). \emph{Linear probes} fit simple classifiers/regressors on hidden states; we train them to predict intermediates (e.g., $s_t$) at each layer/position to locate where state information is represented and whether the latent trajectory forms an accumulator-like trace (Appendix~\ref{app:probing}). Finally, \emph{activation patching} is a causal intervention that swaps clean activations into corrupted runs; we patch inputs, latent thoughts, and boundary positions to identify which locations are necessary to recover the correct answer and thus which pathway CODI relies on (Appendix~\ref{app:activation_patching}).

\section{Empirical Experiments}

A core objective of this work is to test whether \emph{latent} chain-of-thought (CoT) implements genuinely sequential, step-by-step computation, as opposed to producing correct outputs via shallow heuristics. We operationalize this question through the presence of a \emph{bridge state}: an internal representation of the intermediate variable that must be computed after the first hop and then used to complete the second hop. We begin with the simplest controlled setting—two-hop instances in the polynomial task with sequence length three. The default CODI model is a three-layer, two-head GPT-2–style transformer (see Appendix~\ref{app:CODI_training_detail} for full training details). 

\subsection{Do bridge states form, and where are they represented?}

To determine \emph{whether} and \emph{where} the bridge state forms, we apply the \emph{logit lens} to the residual stream at every latent step and token position and track the decodability of the intermediate tokens $s_1, s_2, s_3$.  In the two-hop version of our polynomial task, the inputs are $x_1, x_2, x_3$, with
{\setlength{\abovedisplayskip}{4pt}
 \setlength{\belowdisplayskip}{4pt}
 \setlength{\abovedisplayshortskip}{2pt}
 \setlength{\belowdisplayshortskip}{2pt}
\[
\begin{aligned}
s_1 &= x_1,\quad s_2 = s_1 x_2 + 1 \pmod{50},\\
s_3 &= s_2 x_3 + 1 \pmod{50},
\end{aligned}
\]
}
where $s_3$ is the final answer. The central question is whether an explicit representation of the bridge state $s_2$ becomes decodable \emph{before} the model outputs the final answer. 

As shown in \cref{fig:logic_lens_seq_3}, the \emph{logit lens} indicates that the token corresponding to $s_2$ is decodable throughout latent steps \texttt{[$\ell_1$]}--\texttt{[$\ell_6$]}, with mean decoded probability ranging from $0.359$ to $0.709$ (averaged over layers and test inputs). We observe the same trend with linear probing (\cref{fig:probing_seq_3} in Appendix \ref{app:probing}): an $s_2$ signal emerges after layer 2 at the \texttt{[BoT]} position and remains decodable across all six latent steps, with probe confidence approaching $1$. Together, these results suggest that the model uses the latent channel to \emph{instantiate and maintain} an internal representation of the bridge state $s_2$ prior to producing the final answer.

\subsection{How is the final answer computed, and how does information flow to \texttt{[ANS]}?}
Having established that the bridge state $s_2$ becomes decodable over the latent trajectory, we next ask how this intermediate representation is \emph{used} to produce the final answer: what routes information into the final readout position, and how are the two required inputs ($s_2$ and $x_3$) combined? We analyze attention map across layers to identify the dominant sources contributing to its residual stream.

\begin{figure}[ht]
  \centering 
  \includegraphics[width=\columnwidth]{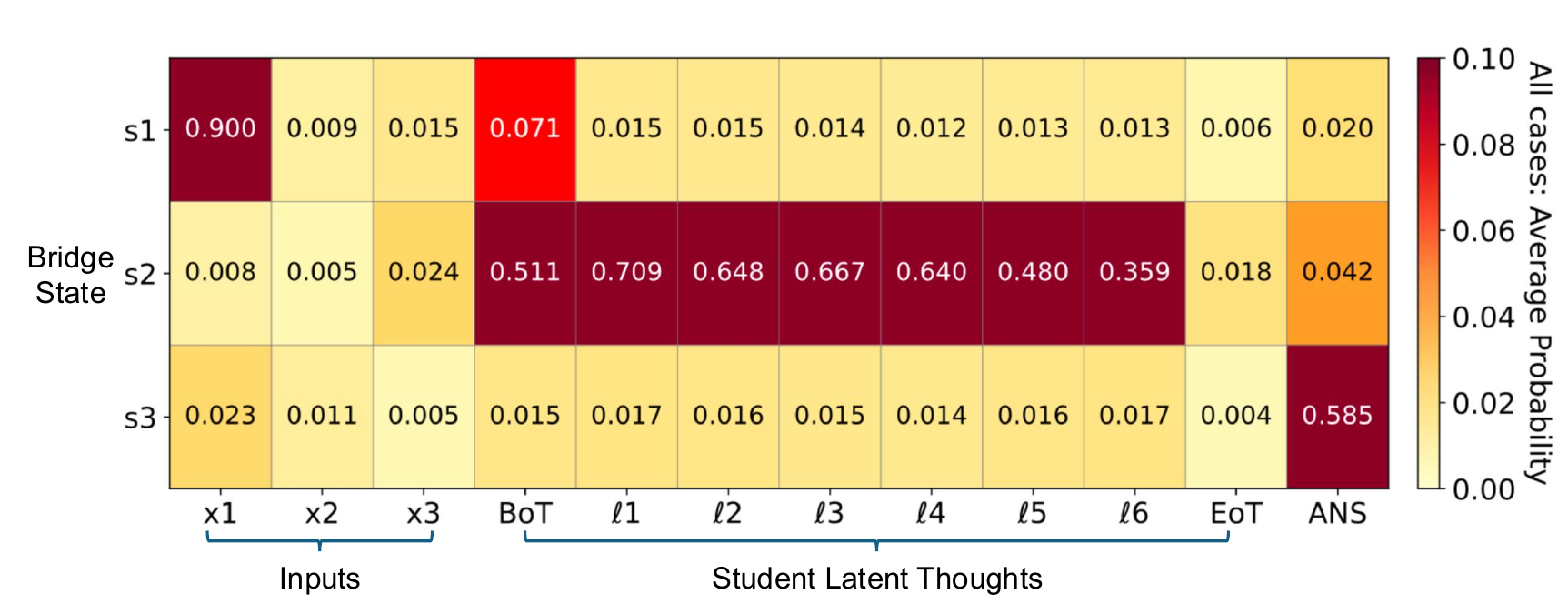}
  \caption{
    \textbf{Logit Lens on intermediate states $s_1, s_2, s_3$ in the two-hop polynomial task}.
    The bridge state $s_2$ becomes decodable during the latent computation, indicating that it is formed and maintained in the model's latent channel. Each cell show average decoding probability across all layers and all test inputs. 
  }
  \label{fig:logic_lens_seq_3}
\end{figure}


\textbf{Attention Map.} From the attention maps in \cref{fig:2hop_L3H2_attn_map} (Appendix \ref{app:attention_maps}), we observe several heads in which the \texttt{[EoT]} and \texttt{[Ans]} tokens place high attention weight directly on the $x_3$ token (e.g., Head 1/2 in Layers 1--2, and Head 1 in Layer 3). This pattern is consistent with \emph{a copy-like pathway that routes information from $x_3$ into the residual streams of \texttt{[EoT]} and \texttt{[Ans]}}. This interpretation is supported by linear probing \cref{fig:2_hop_probing_x3} (Appendix \ref{app:probing}): $x_3$ is not decodable throughout the latent steps \texttt{[$\ell_1$]}--\texttt{[$\ell_6$]}, but becomes strongly decodable at \texttt{[EoT]} and \texttt{[Ans]}, with probe probability near 1 at \texttt{[EoT]} and approximately 0.85 at \texttt{[Ans]}. 

Combining this with the logit-lens and probing results indicating that the bridge-state representation (e.g., $s_2$) is maintained across latent steps, we see a clear division of labor: $x_3$ is delivered through an early, skip-like pathway, while the intermediate state is stored and updated within the latent computation and only incorporated later into \texttt{[Ans]}. The final prediction is then produced by mixing these two information streams in the \texttt{[Ans]} residual stream prior to unembedding.

\textbf{Causal Evidence from Interventions.} Activation patching provides complementary causal evidence for a direct $x_3 \rightarrow \texttt{[Ans]}$ routing pathway. From \cref{fig:activation_pathcing_x2_seq_3}, patching the residual stream from a correct run into an $x_3$-corrupted run at the latent-step positions (\texttt{[$\ell_1$]}--[$\ell_6$], across layers) yields no accuracy recovery ($0\%$ average over all corrupted samples), indicating that the latent computation does not carry the $x_3$ signal in a way that affects the output. In contrast, applying the same patch at the \texttt{[Ans]} position restores performance: for the 3-layer transformer, injecting the patch after the second layer (\texttt{L2-Post} in \cref{fig:activation_pathcing_x2_seq_3}) recovers $100\%$ accuracy.  

Moreover, \cref{fig:activation_pathcing_x1_seq_3} shows that patching the residual stream from a correct run into an $x_2$-corrupted run at the \emph{start of the latent computation} yields substantial recovery. The effect is strongest at the \texttt{[BoT]} position and remains large for early latent steps $\ell_1$ and $\ell_2$. In particular, patching at \texttt{L2-Post} on the \texttt{[BoT]} token restores accuracy to $100\%$; recovery is still around $80\%$ for $\ell_1$ across layers and around $70\%$ for $\ell_2$ in earlier layers. Strikingly, patching at later latent steps produces no recovery ($0\%$), suggesting that the causally relevant computation for this task occurs early in the latent trajectory. These results have two implications. First, \texttt{[BoT]} is not merely a delimiter that marks the onset of latent reasoning---it participates in the computation and can carry causally important state. Second, for this two-hop polynomial task, a 6-step latent trajectory appears longer than necessary: mechanistic interventions can therefore provide practical guidance for selecting the number of latent steps in CODI-like models.

Together, these causal effects indicate that the intermediate state is constructed and used at \texttt{[BoT]} and early latent positions in a way that directly impacts the final output. We observe the same qualitative pattern when patching $x_1$, further strengthening this conclusion. Taken with the decodability and attention results, the evidence is most consistent with a two-stream mechanism: the latent trajectory is primarily used to construct the bridge state $s_2$, while $x_3$ is routed through a largely direct pathway and combined with the latent-state readout at \texttt{[Ans]}.

\begin{figure}[ht]
\centering
    \begin{subfigure}{\columnwidth}
      \centering
      \includegraphics[width=\columnwidth]{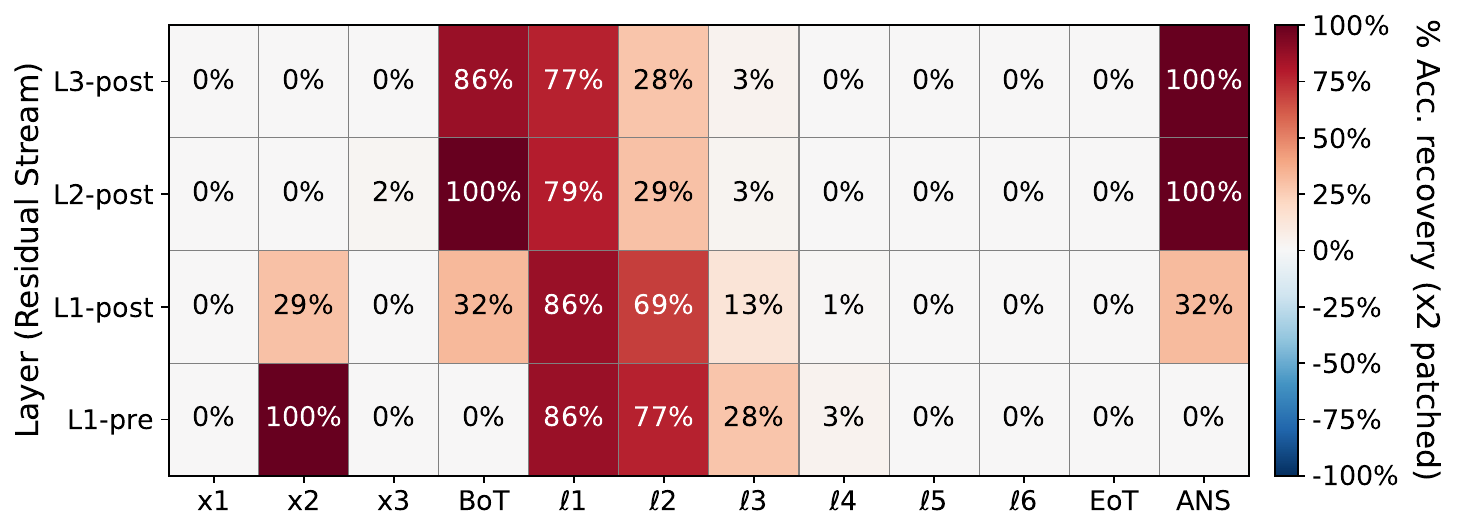}
      \caption{Patching for $x_2$-corrupted runs}
      \label{fig:activation_pathcing_x1_seq_3}
    \end{subfigure}
    \vspace{0.3em} 
    \begin{subfigure}{\columnwidth}
      \centering
      \includegraphics[width=\columnwidth]{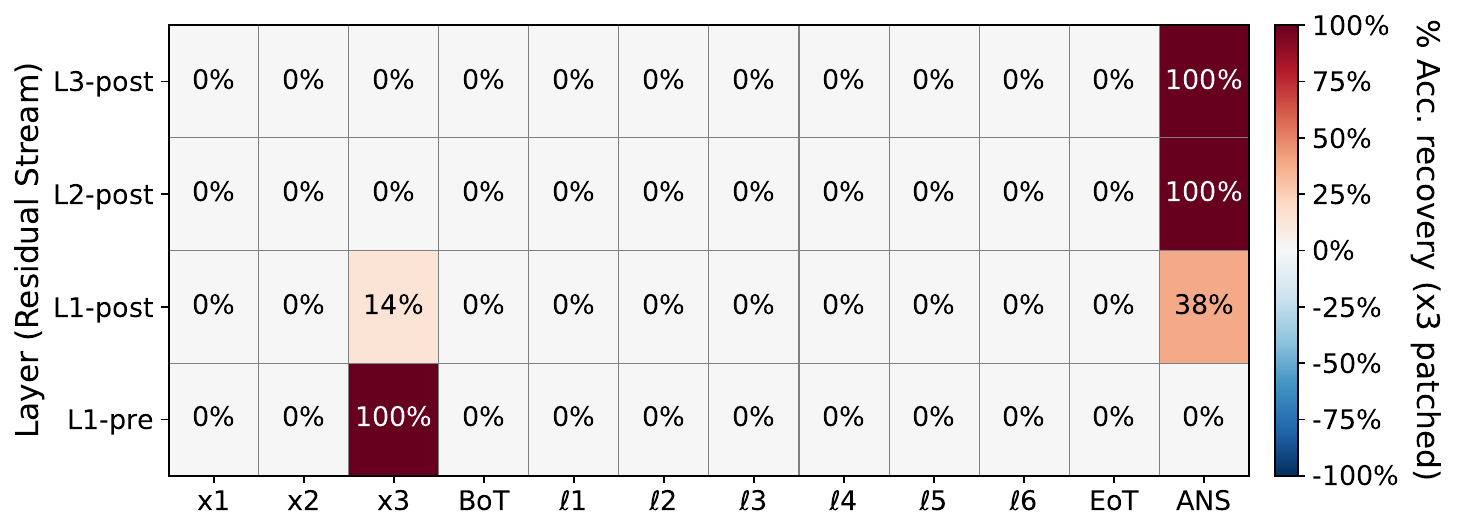}
      \caption{Patching for $x_3$-corrupted runs}
      \label{fig:activation_pathcing_x2_seq_3}
    \end{subfigure}

    \caption{\textbf{Activation patching for input-token corruptions (two-hop).} Cells show mean accuracy recovery (\%) over corrupted samples. Patching at latent-thought positions rescues $x_2$ corruptions, implicating the latent channel in storing $s_2$. For $x_3$ corruptions, recovery concentrates at \texttt{[Ans]}, consistent with direct routing of $x_3$ to the output.}
    \label{fig:activation_patching_x1_x2_seq_3}
\end{figure}

\subsection{How does compositional depth affect the emergent computation?}

The two-hop analyses reveal a stable mechanism: CODI (i) forms a bridge state in the latent channel and (ii) fuses it with the final input token at \texttt{[Ans]}. We then examine whether greater task depth elicits a longer multi-step latent rollout. To test this, we introduce an $n$-hop variant with $n \in \{3, 4, 5, 7, 9, 31\}.$

\textbf{Three-hop Polynomial Task}. 
For the three-hop polynomial task, there are two intermediate bridge states, $s_2$ and $s_3$. We find that CODI exhibits latent reasoning behavior by forming both intermediate states within its latent channel. At the same time, the model appears to use a direct-copy strategy for the final input, with $x_4$ copied to the \texttt{[Ans]} token to produce the final output. Additional details are provided in Appendix \ref{app:three_hop_analysis}.

\textbf{$n$-hop Polynomial Task, $n \ge 4$.} In the $n$-hop task we have $n+1$ inputs (i.e., $x_1,\ldots,x_{n+1}$) and $n-1$ intermediate bridge states, $s_2,\ldots,s_n$. For $n \ge 4$, increasing hop depth preserves the same two-stream routing pattern but induces a clear \emph{collapse} in intermediate-state visibility. Attention maps continue to reveal an early, direct pathway from the final input $x_{n+1}$ into \texttt{[Ans]} (e.g., strong \texttt{[Ans]}$\rightarrow x_{n+1}$ attention heads in \cref{fig:n_hops_attn_head}). However, logit-lens and probing analyses show that the latent stream does \emph{not} expose a step-by-step chain of intermediates: among $s_2,\ldots,s_n$, only the late intermediate $s_n$ (sometimes the last two intermediates $s_{n-1}, s_n$ as seen in Appendix \ref{app:partial_roll_out_last_two}) is reliably decodable---both across latent steps \texttt{[$\ell_1$]}--\texttt{[$\ell_6$]} and across the input positions $x_1,\ldots,x_{n+1}$, as seen in \cref{fig:n_hops_logic_lens}.

\begin{figure}[t]
  \centering
  \captionsetup[subfigure]{font=footnotesize,skip=2pt}

  \begin{minipage}{1\linewidth}
    \centering
    \includegraphics[width=1\linewidth, height=0.75cm]{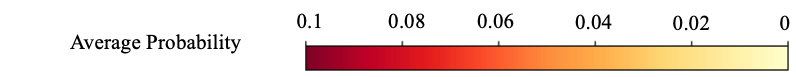}
    \vspace{-0.75em} 
  \end{minipage}

  \begin{subfigure}[t]{0.49\linewidth}
    \centering
    \includegraphics[width=\linewidth]{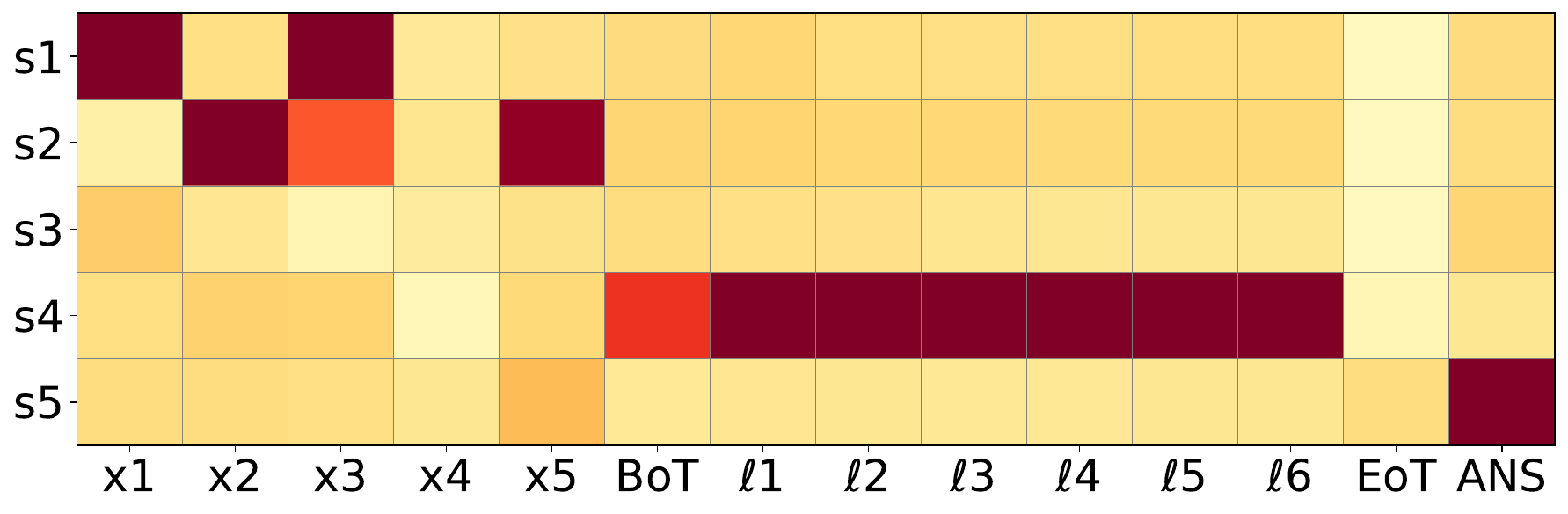}
    \caption{4-hops task}
  \end{subfigure}
  \hfill
  \begin{subfigure}[t]{0.49\linewidth}
    \centering
    \includegraphics[width=\linewidth]{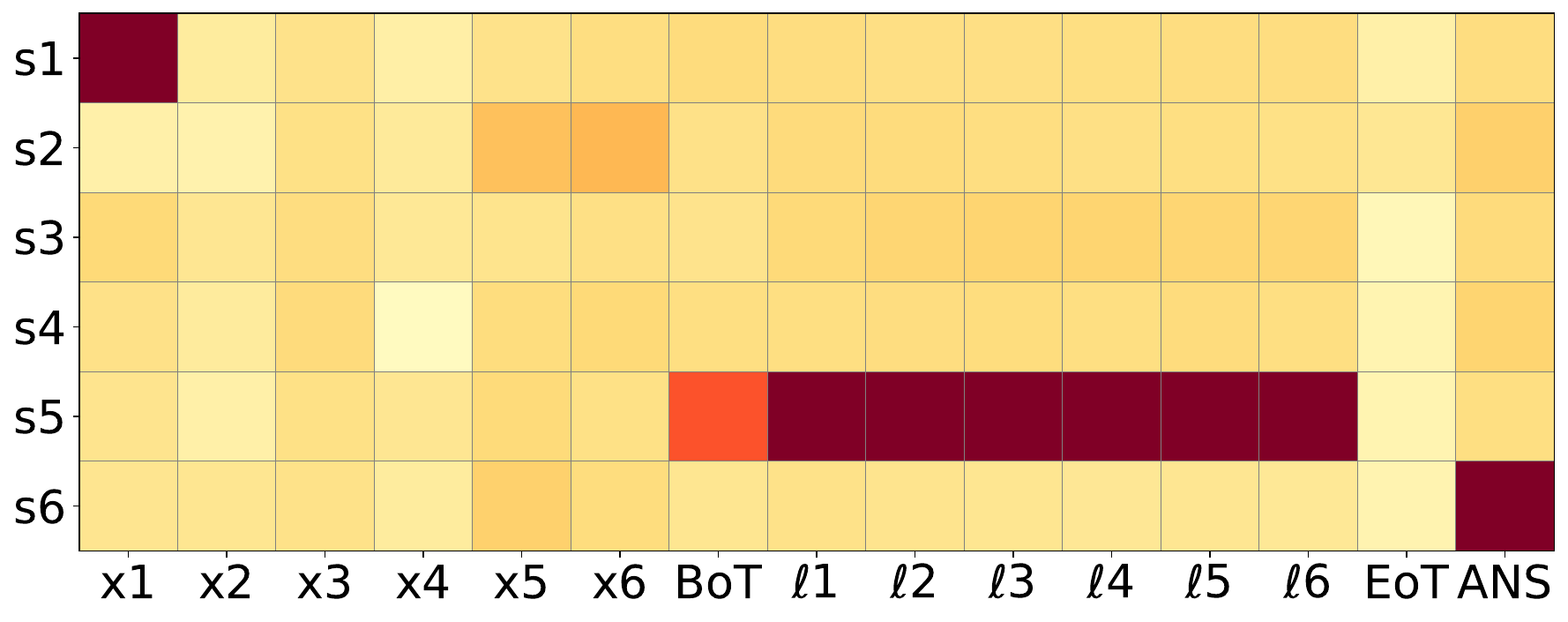}
    \caption{5-hops task}
  \end{subfigure}

  \vspace{0.8em} 

  \begin{subfigure}[t]{0.49\linewidth}
    \centering
    \includegraphics[width=\linewidth]{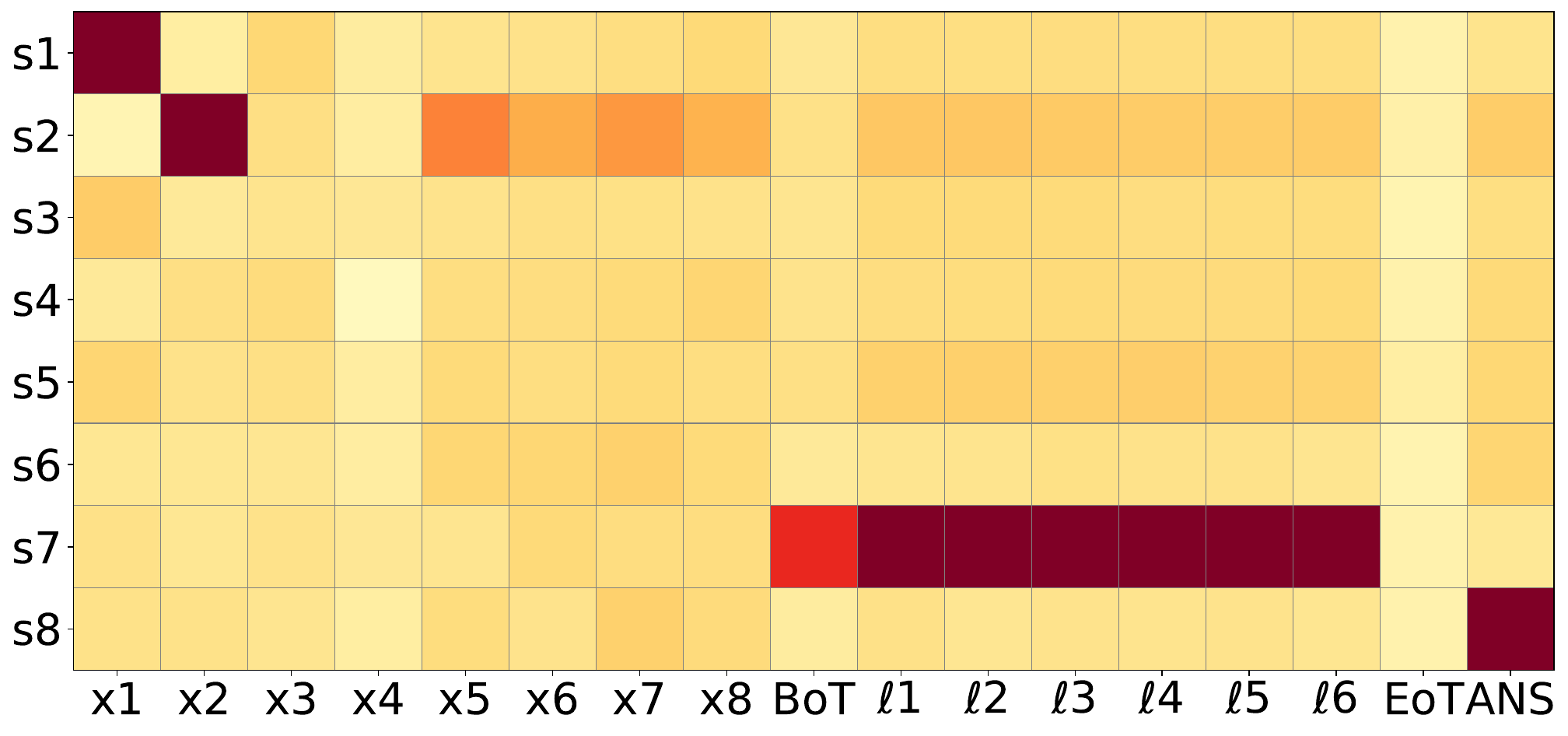}
    \caption{7-hops task}
  \end{subfigure}
  \hfill
  \begin{subfigure}[t]{0.49\linewidth}
    \centering
    \includegraphics[width=\linewidth]{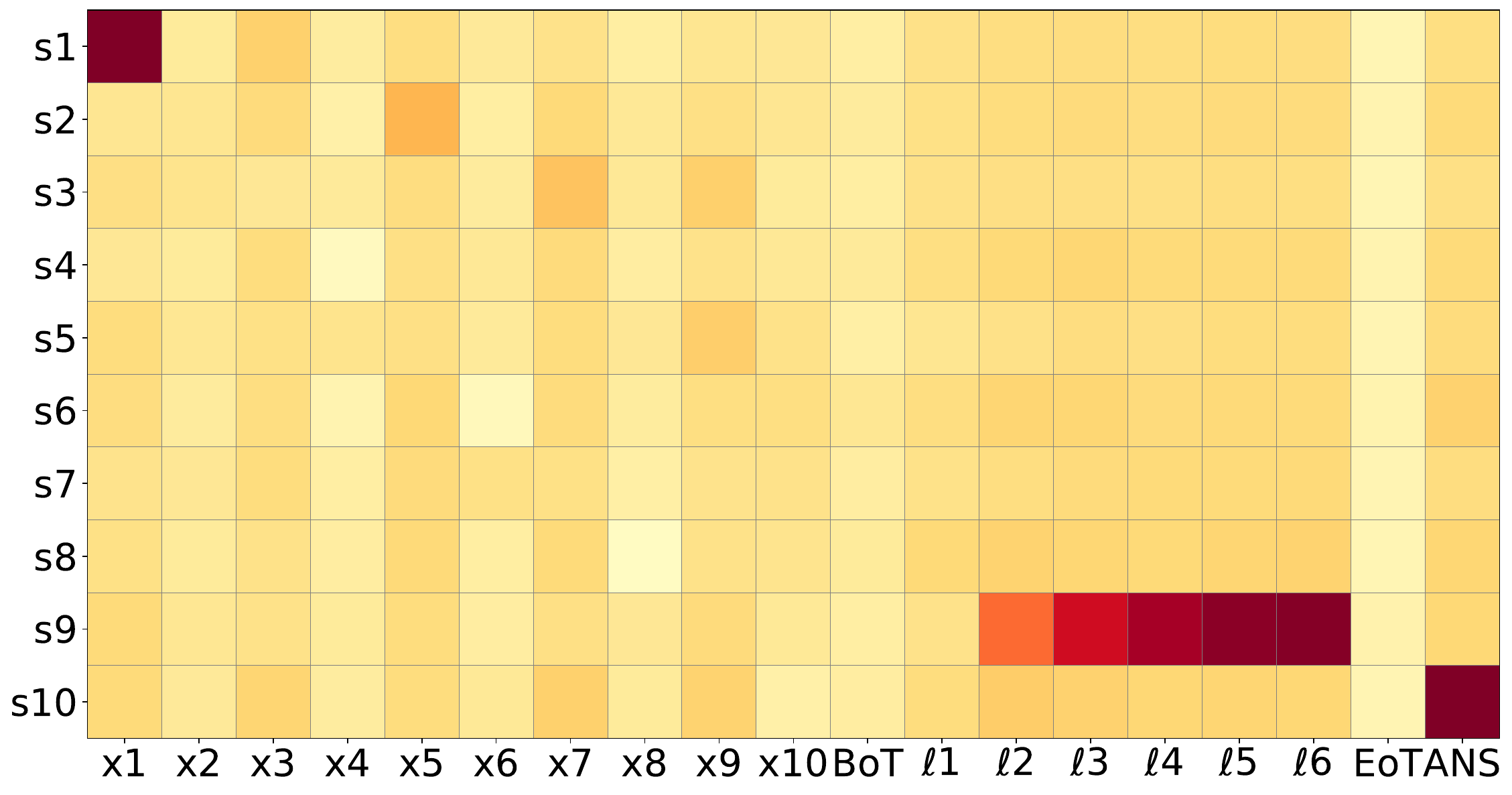}
    \caption{9-hops task}
  \end{subfigure}

  \caption{\textbf{Logit Lens analysis of the $n$-hop polynomial task with modulus $50$.}
  Only the final bridge state $s_n$ appears in the computation pathway, while earlier intermediate bridge states are collapsed.}
  \label{fig:n_hops_logic_lens}
\end{figure}

Activation patching provides complementary causal evidence for this direct $x_{n+1}\rightarrow$\texttt{[Ans]} route. When we patch the residual stream from a clean run into an $x_{n+1}$-corrupted run at the latent-step positions, accuracy does not recover (0\%), indicating that the latent computation does not carry an $x_{n+1}$ signal that causally affects the output. In contrast, applying the same patch at the \texttt{[Ans]} position restores performance in the late layers. Finally, patching a clean run into runs corrupted at earlier inputs $x_i$ (for $i \le n$) yields nontrivial recovery  at specific latent reasoning steps, with larger $i$ generally producing stronger recovery.

It is striking that increasing the nominal hop count does not compel CODI to instantiate an explicit multi-step latent chain that sequentially produces $(s_1,\ldots,s_n)$. Instead, greater depth amplifies a \emph{late-stage bottleneck}: the latent trajectory is dominated by forming and refining a near-final intermediate (here, $s_n$), while the final input $x_{n+1}$ is routed through a separate, direct pathway. The model then produces the answer via late fusion at \texttt{[Ans]}. We further increase the hop count to $n=31$, following the same experimental setup as the iteration-head task. Even at this depth, we continue to observe the previously described partial latent pathway. 

One plausible hypothesis is a compute limitation: CODI may not have sufficient latent steps to represent and propagate the full sequence of intermediate states. As an alternative, it adopts a \emph{partial} reasoning route that decomposes the computation into two subproblems. In this view, the direct $x_{n+1}\!\rightarrow\!\texttt{[Ans]}$ routing effectively reduces the latent burden to predicting $s_n$ and then combining it with $x_{n+1}$ at \texttt{[Ans]}\---a simpler objective than explicitly rolling out the full chain to produce $s_{n+1}$ (i.e., the final answer) entirely within the latent trajectory. This hybrid strategy reduces the difficulty of the latent computation while still retaining enough structure to generalize across hop depth.

\subsection{How does task definition affect the observed mechanism?}

\begin{figure}[t]
  \centering

  \begin{minipage}{1\linewidth}
    \centering
    \includegraphics[width=1\linewidth, height=0.75cm]{Other_Figures/Color_bars_horizontal_2.png}
    \vspace{-0.75em} 
  \end{minipage}
  
  \begin{subfigure}{0.49\columnwidth}
    \centering
    \includegraphics[width=\linewidth]{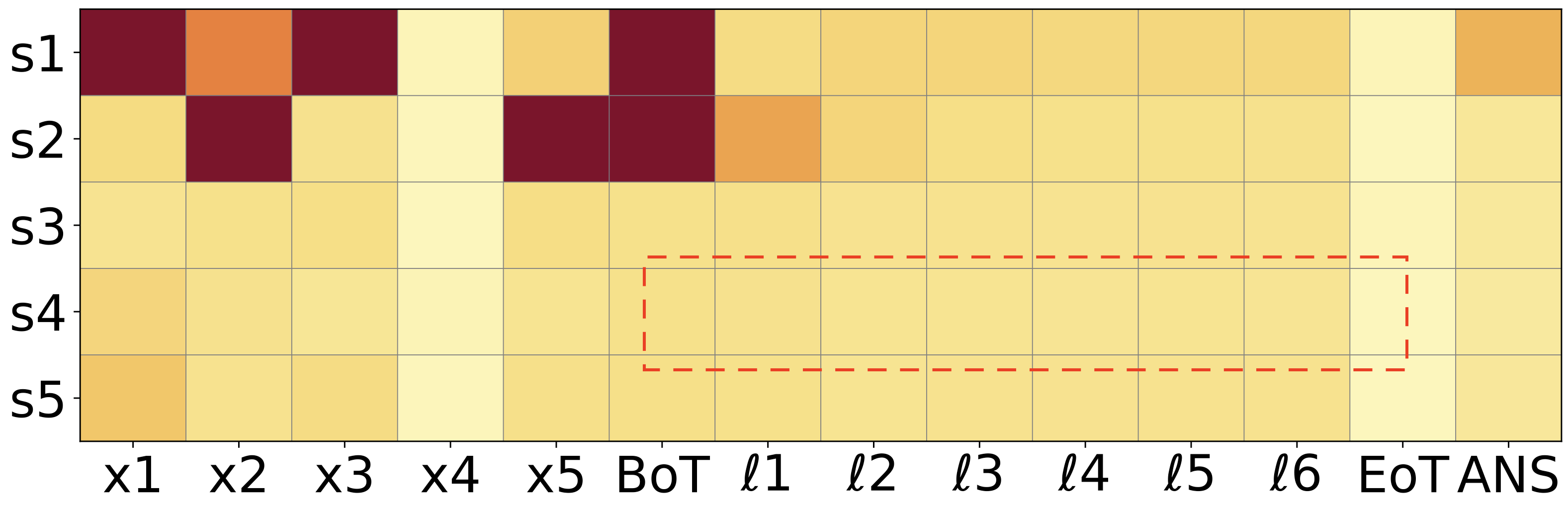}
    \caption{4-hops task}
  \end{subfigure}\hfill
  \begin{subfigure}{0.49\columnwidth}
    \centering
    \includegraphics[width=\linewidth]{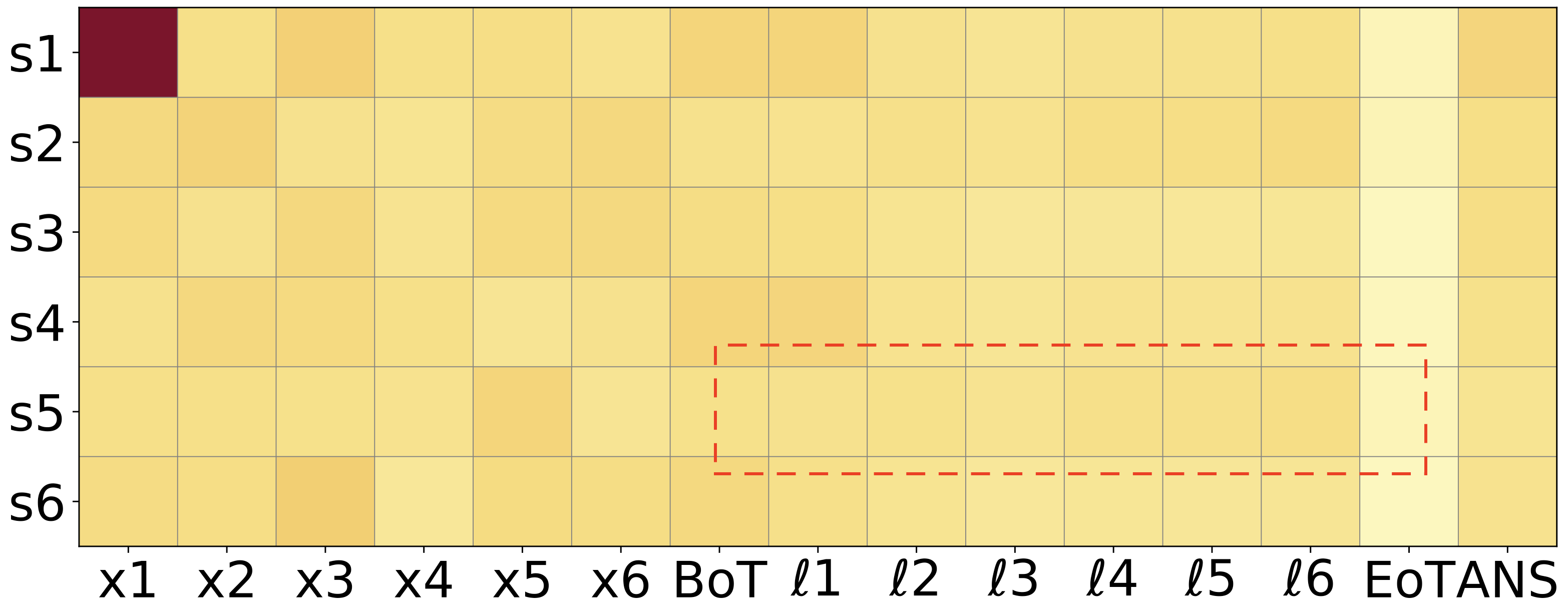}
    \caption{5-hops task}
  \end{subfigure}

  \vspace{0.6em}

  \begin{subfigure}{0.49\columnwidth}
    \centering
    \includegraphics[width=\linewidth]{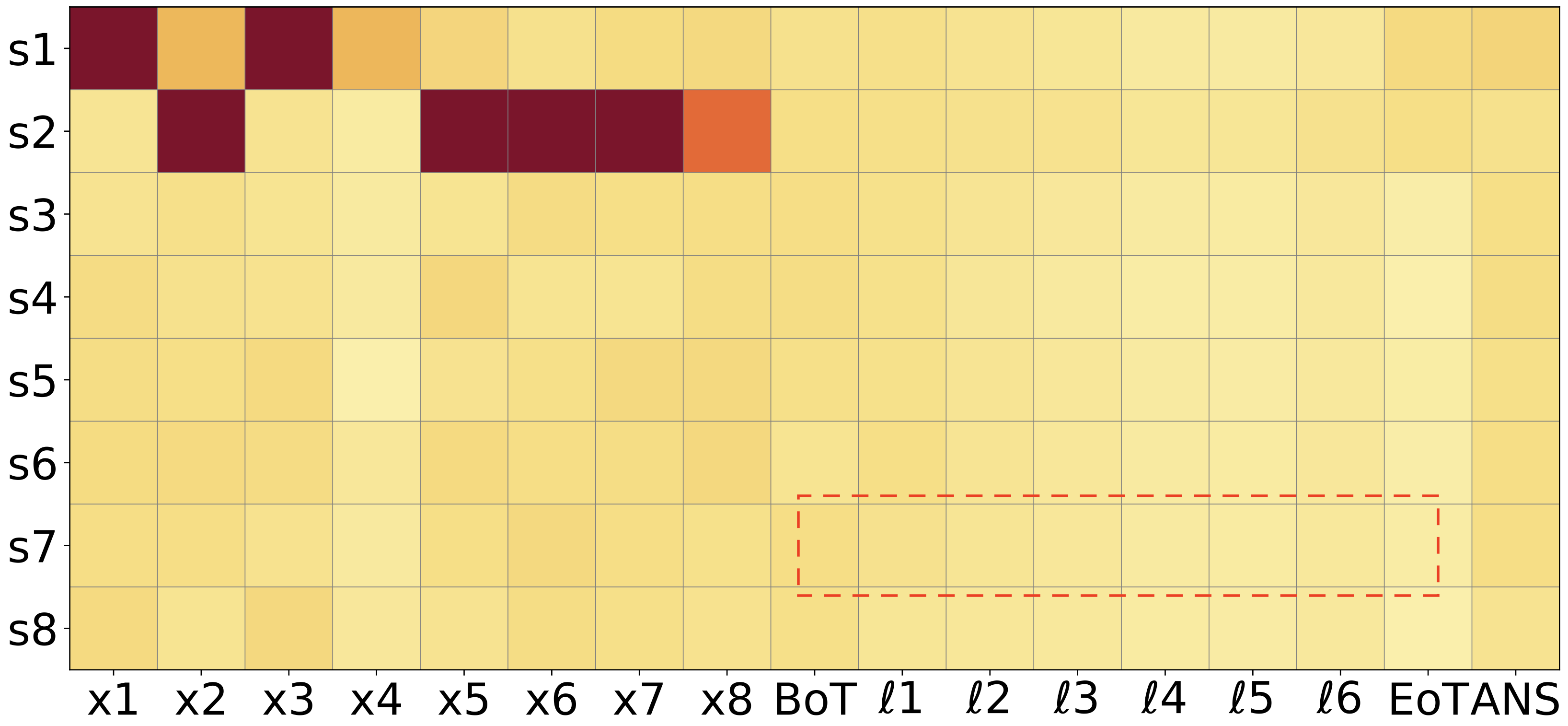}
    \caption{7-hops task}
  \end{subfigure}\hfill
  \begin{subfigure}{0.49\columnwidth}
    \centering
    \includegraphics[width=\linewidth]{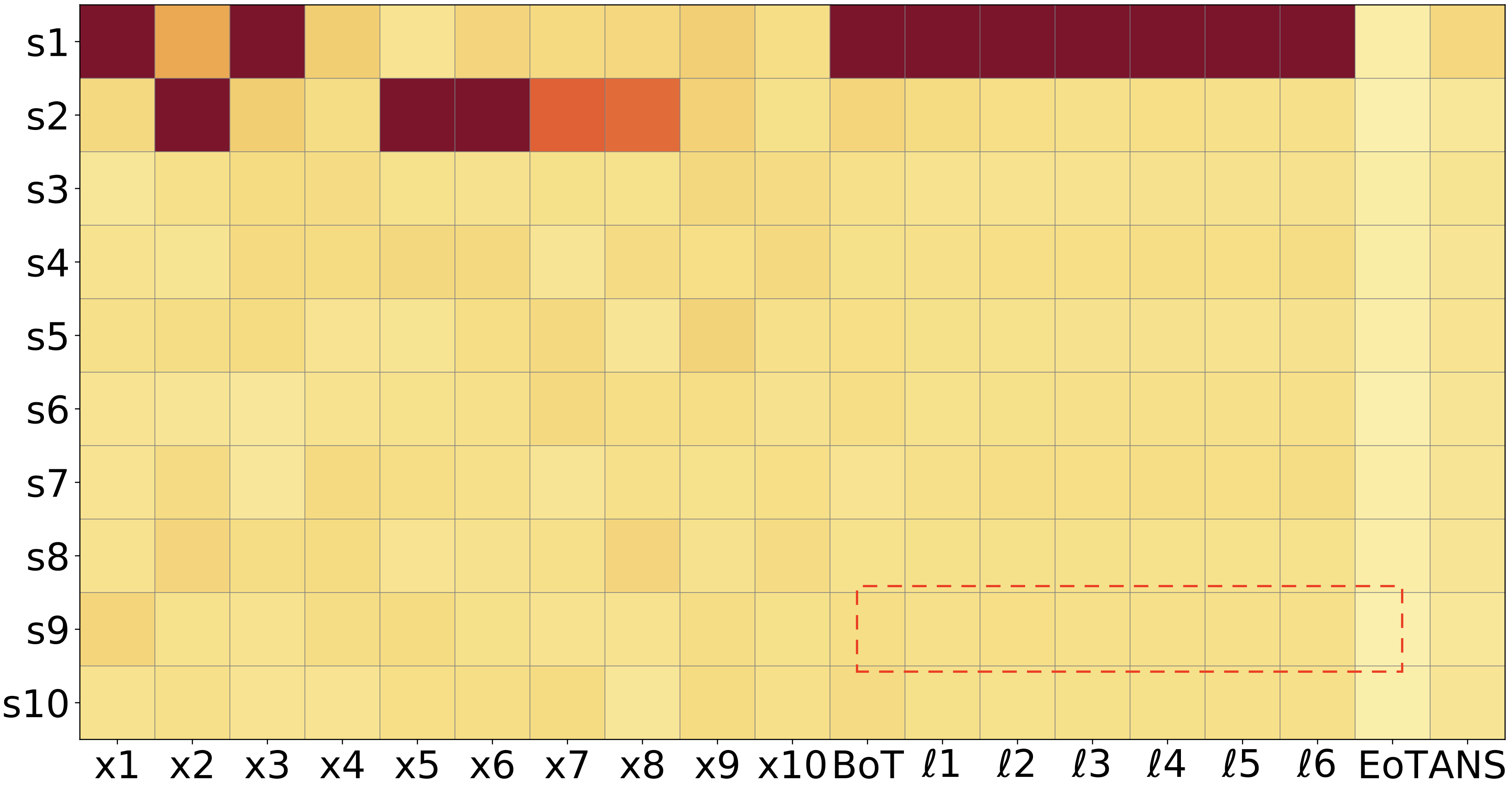}
    \caption{9-hops task}
  \end{subfigure}

\caption{\textbf{Logit Lens analysis of the $n$-hop polynomial task with modulus $41$.} The final bridge state $s_n$ (in dotted rectangular box) is no longer decodable along the latent trajectory, indicating a breakdown of step-by-step reasoning.}
  \label{fig:n_hops_logic_lens_mod41}
\end{figure}

\begin{table*}[t]
\centering
\caption{\textbf{Accuracy (\%) of CoT-trained models, a non-CoT standard transformer, and CODI across moduli $m$}. CODI and the non-CoT baseline degrade sharply when $m$ is prime. All results use a 3-layer, 2-head transformer with total sequence length 32, and report accuracy aggregated over all sequence lengths.}
\begin{tabular}{@{}lrrrrrrrrrr@{}}
\toprule
Mod $m$ & 41 & 42 & 43 & 44 & 45 & 46 & 47 & 48 & 49 & 50 \\ \midrule
Full-CoT & 100.00 & 100.00 & 28.66 & 97.30 & 100.00 & 93.60 & 90.85 & 100.00 & 100.00 & 100.00 \\
Non-CoT  & 5.44   & 35.73  & 5.31  & 18.44 & 39.94  & 5.93  & 2.08  & 45.62  & 8.50   & 25.81  \\
CODI     & 8.67   & 66.40  & 8.59  & 34.19 & 67.05  & 16.74 & 8.31  & 91.07  & 19.97  & 43.95  \\ \bottomrule
\end{tabular}

\label{tab:accuracy_across_m}
\end{table*}

Our default iteration rule is $
s_t \;=\; F(s_{t-1},x_t)\;=\; (s_{t-1}\cdot x_t + b)\pmod m,
$ 
with $b=1$ and $m=50$. We evaluate robustness to the task specification by varying the additive constant ($b\in\{1,3,4\}$) and the modulus ($m\in\{41,\ldots,50\}$). Changing $b$ does not qualitatively affect the behaviors reported above: across all tested values, we observe the same intermediate-state formation patterns and attention-routing signatures.

However, the choice of $m$ has a much larger effect. Under composite moduli, the mechanistic patterns remain largely unchanged. In contrast, when $m$ is prime, performance drops sharply (\cref{tab:accuracy_across_m}) and the late-intermediate ``partial-rollout'' signature vanishes (i.e., $s_n$ is no longer decodable during the latent steps as seen in \cref{fig:n_hops_logic_lens_mod41}). We provide mathematical intuition for this behavior in the next section.

\section{Theoretical Analysis}
\label{sec:prime-vs-composite}

Motivated by our empirical finding that CODI collapses---and that the partial, late-bottleneck reasoning signature disappears---when the modulus is prime, we develop a theoretical analysis of the polynomial sequential tasks. We show that the same task family differs substantially in difficulty under prime versus composite moduli.

\textbf{Task setup.}
Fix a modulus $m\ge 2$ and consider the ring $R_m := \mathbb{Z}/m\mathbb{Z}$.
Given inputs $x_1,\dots,x_T \in \{1,\dots,m-1\}$, define a sequential state by
\begin{equation}
\begin{aligned}
s_1 &:= x_1, \\
s_t &:= f_{x_t}(s_{t-1}), := s_{t-1}x_t + b \pmod m, 
\end{aligned}
\label{eq:recurrence}
\end{equation}
for $t=2,\dots,T,$ and let the label be $y := s_T \in R_m$.
We analyze how the algebraic structure of $R_m$ changes the intrinsic dependency of $y$ on the prefix $(x_1,\dots,x_{T-k})$.

\textbf{Affine maps are permutations iff the multiplier is a unit.}
Let $f_x(s) = sx + b$ over $R_m$.

\begin{lemma}[Bijection criterion]
\label{lem:bijection}
For $x\in R_m$, the map $f_x : R_m \to R_m$ is bijective iff $x$ is a unit in $R_m$, i.e.\ $\gcd(x,m)=1$.
\end{lemma}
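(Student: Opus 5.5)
The plan is to reduce the affine map $f_x(s)=sx+b$ to the multiplication map $\mu_x(s)=sx$. Since $f_x = \tau_b\circ\mu_x$, where $\tau_b(s)=s+b$ is translation by $b$, and $\tau_b$ is a bijection of $R_m$ with inverse $\tau_{-b}$, the map $f_x$ is bijective iff $\mu_x$ is bijective. Because $R_m$ is finite, bijectivity of $\mu_x$ is equivalent to either injectivity or surjectivity alone. This lets me choose whichever characterization is most convenient in each direction.

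\textbf{($\Leftarrow$).} Suppose $\gcd(x,m)=1$. By B\'ezout there exist integers $u,v$ with $ux+vm=1$, so $u$ is a multiplicative inverse of $x$ in $R_m$. Then $g(s)=u(s-b)$ satisfies $g\circ f_x=\mathrm{id}$ and $f_x\circ g=\mathrm{id}$, which gives an explicit two-sided inverse. This also identifies ``unit in $R_m$'' with ``$\gcd(x,m)=1$'': B\'ezout gives one direction, and conversely $ux\equiv 1 \pmod m$ means $ux-1$ is a multiple of $m$, so any common divisor of $x$ and $m$ divides $1$.

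\textbf{($\Rightarrow$).} Suppose $d=\gcd(x,m)>1$. I will exhibit a collision. Set $s_0=m/d$, which is nonzero in $R_m$ since $0<m/d<m$. Then $s_0x=(m/d)\,x=m\,(x/d)\equiv 0$, so $f_x(s_0)=b=f_x(0)$ with $s_0\neq 0$. Hence $f_x$ is not injective, and therefore not bijective. The statement allows $x=0$ as an element of $R_m$; this case is covered, since $\gcd(0,m)=m>1$ and $f_0$ is constant, so it is not injective for $m\ge 2$.

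There is no real obstacle here. The only point needing care is the bookkeeping between the three equivalent conditions: $x$ is a unit, $\gcd(x,m)=1$, and $f_x$ is bijective. The collision construction $s_0=m/d$ is the key step, and it also shows that $f_x$ is exactly $d$-to-one onto its image, which is what the later compressibility discussion about many-to-one maps under composite moduli will use.
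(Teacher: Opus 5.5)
Your proof is correct and follows the paper's route. You strip off the translation $\tau_b$ to reduce to $\mu_x(s)=sx$ and identify bijectivity of $\mu_x$ with invertibility of $x$ modulo $m$, filling in what the paper asserts in one line: B\'ezout supplies the inverse, and the nonzero kernel element $m/d$ (which also generates $\ker\mu_x$ in the paper's proof of Lemma~\ref{lem:contraction}) rules out bijectivity when $d>1$.

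One small caution concerns your closing remark. That single collision, together with its multiples, only shows $f_x$ is at least $d$-to-one; exactness needs the reverse inclusion $\ker\mu_x\subseteq (m/d)R_m$, which is where $\gcd(x/d,m/d)=1$ enters.
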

\noindent\emph{Proof.} See Appendix \ref{app:lemma_5_1_proof}.

\textbf{Composite moduli create many-to-one ``contractions.''}
When $\gcd(x,m)>1$, multiplication by $x$ collapses information.

\begin{lemma}[Exact contraction factor]
\label{lem:contraction}
Let $d := \gcd(x,m)$. Then the map $s\mapsto sx\ (\mathrm{mod}\ m)$ has image size $m/d$, and every output has exactly $d$ preimages.
Equivalently, $f_x(s)=sx+b$ is a $d$-to-$1$ map.
\end{lemma}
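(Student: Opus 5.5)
The plan is to view the multiplication map $\mu_x : R_m \to R_m$, $s \mapsto sx$, as an endomorphism of the additive group $(R_m,+)$. It is additive because $(s+s')x = sx + s'x$. By the first isomorphism theorem, $R_m/\ker \mu_x \cong \operatorname{im} \mu_x$. Every fiber $\mu_x^{-1}(y)$ over a point $y$ in the image is a coset $s_0 + \ker\mu_x$, so each output has exactly $|\ker\mu_x|$ preimages, and $|\operatorname{im}\mu_x| = m/|\ker\mu_x|$. The whole statement therefore reduces to proving $|\ker\mu_x| = d$ with $d = \gcd(x,m)$.

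For the kernel computation I would argue in $\mathbb{Z}$. Write $x = d x'$ and $m = d m'$ with $\gcd(x',m')=1$. The condition $sx \equiv 0 \pmod m$ means $m \mid s d x'$, i.e.\ $m' \mid s x'$. Since $x'$ is coprime to $m'$, this is equivalent to $m' \mid s$. The solutions in $\{0,\dots,m-1\}$ are $s = 0, m', 2m', \dots, (d-1)m'$, exactly $d$ of them. Hence $|\ker \mu_x| = d$, and the image has size $m/d$. As a cross-check, the image equals the subgroup $dR_m$ generated by $d$: every $sx$ is a multiple of $d$, and Bézout gives $d = ux + vm$, so $d \equiv ux$ lies in the image.

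For the affine map $f_x(s) = sx + b$, I would note that $f_x = \tau_b \circ \mu_x$ with $\tau_b(y) = y+b$. Translation is a bijection of $R_m$, so $f_x$ has the same fiber sizes as $\mu_x$. Thus $f_x$ is $d$-to-$1$ onto its image $b + dR_m$, which has size $m/d$. This also recovers Lemma~\ref{lem:bijection} as the special case $d=1$.

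No step is a real obstacle. The only point needing care is the divisibility cancellation $m' \mid s x' \Rightarrow m' \mid s$, which uses $\gcd(x',m')=1$ (Euclid's lemma). The other point is phrasing ``every output has exactly $d$ preimages'' precisely as every element of the image. Outputs outside the image have zero preimages, so the statement should be read as being about the image.
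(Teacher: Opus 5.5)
Your proposal is correct and follows essentially the same route as the paper. Like the paper, you treat $s \mapsto sx$ as a homomorphism of $(R_m,+)$ and compute its kernel as the $d$ multiples of $m' = m/d$ using $\gcd(x',m')=1$. You then note that fibers are cosets of the kernel and transfer the fiber sizes to $f_x$ via the bijective translation $\tau_b$. Your B\'ezout identification of the image as $dR_m$, and your remark that ``every output'' must be read as ``every element of the image'', are correct refinements that the paper leaves implicit.
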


\noindent\emph{Proof.} See Appendix \ref{app:lemma_5_2_proof}.

Lemma~\ref{lem:contraction} gives a rigorous sense in which certain steps \emph{erase} state information: if $d>1$, then $s_{t}$ depends on $s_{t-1}$ only through a coarser equivalence class (a factor-$d$ quotient of the state space).

\textbf{Prime-field regime implies long-range dependence.}
Now suppose $m=p$ is prime and inputs are sampled uniformly from $\{1,\dots,p-1\}=\mathbb{F}_p^\times$.
Then $\gcd(x_t,p)=1$ always, so every step $f_{x_t}$ is a permutation of $\mathbb{F}_p$ (Lemma~\ref{lem:bijection}).
Hence the recurrence~\eqref{eq:recurrence} does \emph{not} contract the state space at any time: information about $s_{t-1}$ is never irreversibly discarded.
Unrolling the recurrence yields
\begin{equation}
s_T \equiv x_1\prod_{i=2}^{T} x_i \;+\; b\sum_{t=2}^{T}\ \prod_{i=t+1}^{T} x_i \pmod p,
\label{eq:unrolled}
\end{equation}
a high-degree polynomial in the inputs over the field $\mathbb{F}_p$ (\noindent\textit{Convention:}  $\prod_{i=T+1}^{T} x_i := 1$).
Because multiplication by nonzero elements is invertible, there is no algebraic mechanism that systematically annihilates the older product terms in~\eqref{eq:unrolled};
thus $y=s_T$ typically retains genuine dependence on the full history.

\textbf{Composite-ring regime induces short effective memory under uniform sampling.}
Now let $m$ be composite and sample $x_t \sim \mathrm{Unif}\{1,\dots,m-1\}$.
Define the unit probability
\begin{equation}
\begin{aligned}
u(m) &:= \Pr(\gcd(x_t,m)=1) = \frac{\varphi(m)}{m-1},\\
q(m) &:= 1-u(m).
\end{aligned}
\end{equation}
where $\varphi$ is Euler's totient function.
With probability $q(m)>0$, the final step multiplier is a non-unit and the last update is a many-to-one contraction by a factor $d=\gcd(x_T,m)$ (Lemma~\ref{lem:contraction}).
More generally, let
\begin{equation}
\tau := \max\{t\le T : \gcd(x_t,m)>1\},
\end{equation}
with the convention $\tau=0$ if all $x_1,\dots,x_T$ are units.
Then the length of the terminal all-unit suffix $L := T-\tau$ satisfies
\begin{equation}
\begin{aligned}
\Pr(L\ge k) &= u(m)^k \qquad (k=0,1,\dots,T),\\
\mathbb{E}[L] &= \sum_{k=1}^{T}u(m)^k
= \frac{u(m)\bigl(1-u(m)^T\bigr)}{1-u(m)}.
\end{aligned}
\label{eq:geom}
\end{equation}

Thus, for composite $m$ with $u(m)\ll 1$, there is typically a \emph{recent} contraction event $\tau$ close to $T$.
Each such event shrinks the number of distinguishable states by a factor $d=\gcd(x_\tau,m)$, so the label $y=s_T$ can often be predicted from a low-entropy summary of the past plus only the last few updates (the short suffix after $\tau$).

\textbf{Implications for learned representations.}
The above analysis is model-agnostic: it characterizes the dependency structure induced by Eq.~\eqref{eq:recurrence} under the data distribution. For prime $m=p$, each update is a bijection (Lemma~\ref{lem:bijection}), so the dynamics never contract; Eq.~\eqref{eq:unrolled} shows $s_T$ retains genuine dependence on the full history via multiplicative chains of length $\Theta(T)$. Mechanistically, long-horizon (large $T$) success therefore requires sustained state propagation and routing so the running state can be updated at each step and read out at \texttt{[Ans]}; with limited effective compute (e.g., few latent steps and limited depth/routing capacity), we expect degraded accuracy and missing latent rollouts, consistent with \cref{tab:accuracy_across_m,fig:n_hops_logic_lens_mod41}. For composite $m$, non-unit multipliers occur with probability $q(m)>0$ and induce explicit many-to-one contractions (Lemma~\ref{lem:contraction}); the trailing unit suffix length $L$ has geometric tails (Eq.~\eqref{eq:geom}), so recent contractions make $s_T$ largely depend on a low-entropy summary of the prefix plus the last few updates. This favors predictors that emphasize late intermediates, explaining why CODI often adopts late-bottleneck, partial (late-only) internal rollouts under composite moduli.

\begin{figure}[ht]
  \centering
  \begin{subfigure}{0.49\columnwidth}
    \centering
    \includegraphics[width=\linewidth]{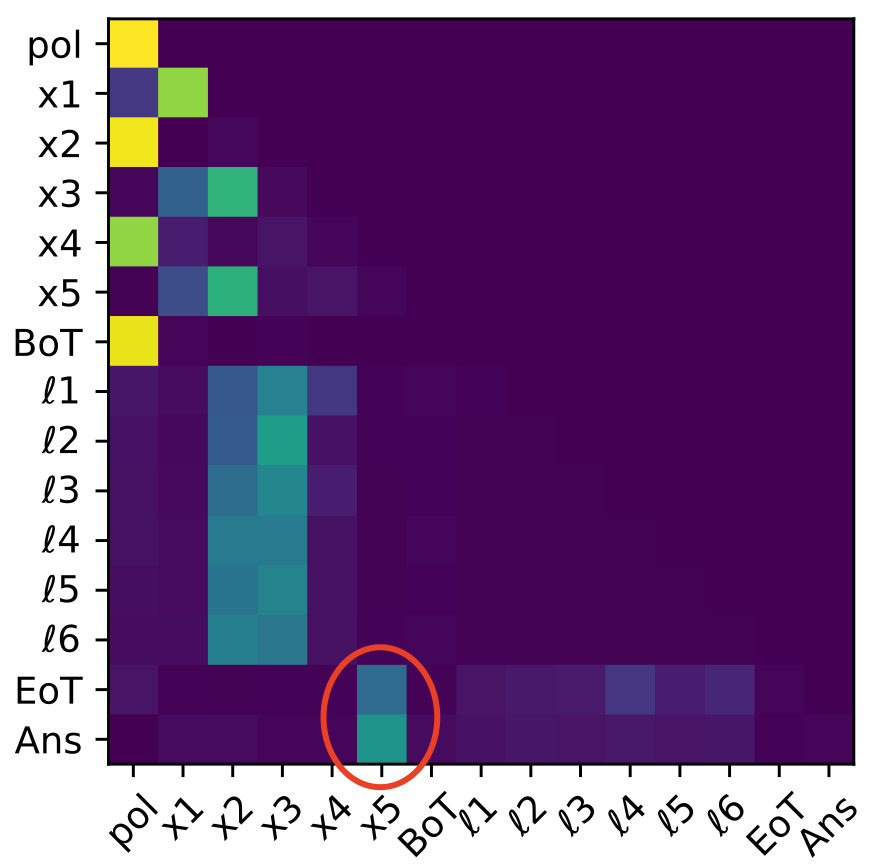}
    \caption{4-hops task}
  \end{subfigure}\hfill
  \begin{subfigure}{0.49\columnwidth}
    \centering
    \includegraphics[width=\linewidth]{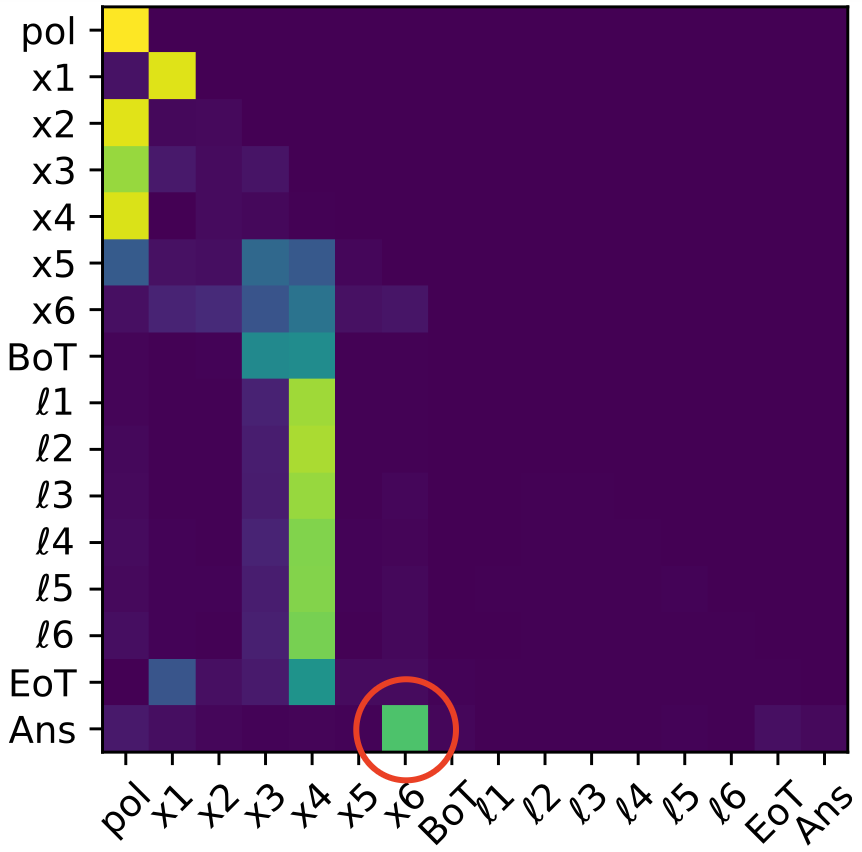}
    \caption{5-hops task}
  \end{subfigure}

  \vspace{0.6em}

  \begin{subfigure}{0.49\columnwidth}
    \centering
    \includegraphics[width=\linewidth]{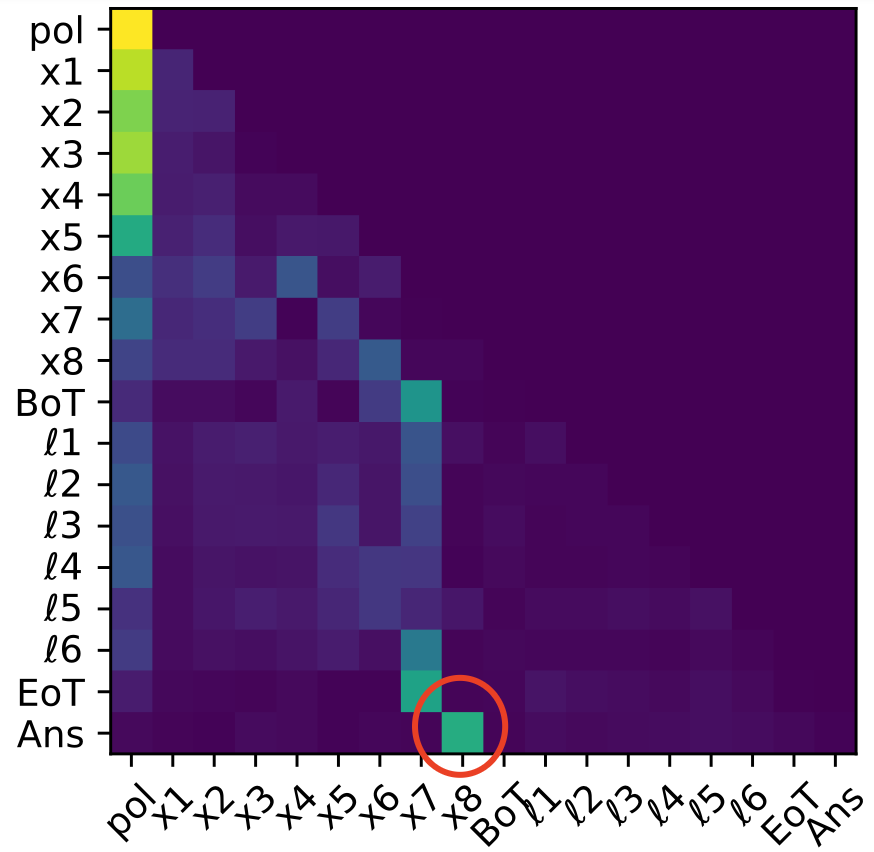}
    \caption{7-hops task}
  \end{subfigure}\hfill
  \begin{subfigure}{0.49\columnwidth}
    \centering
    \includegraphics[width=\linewidth]{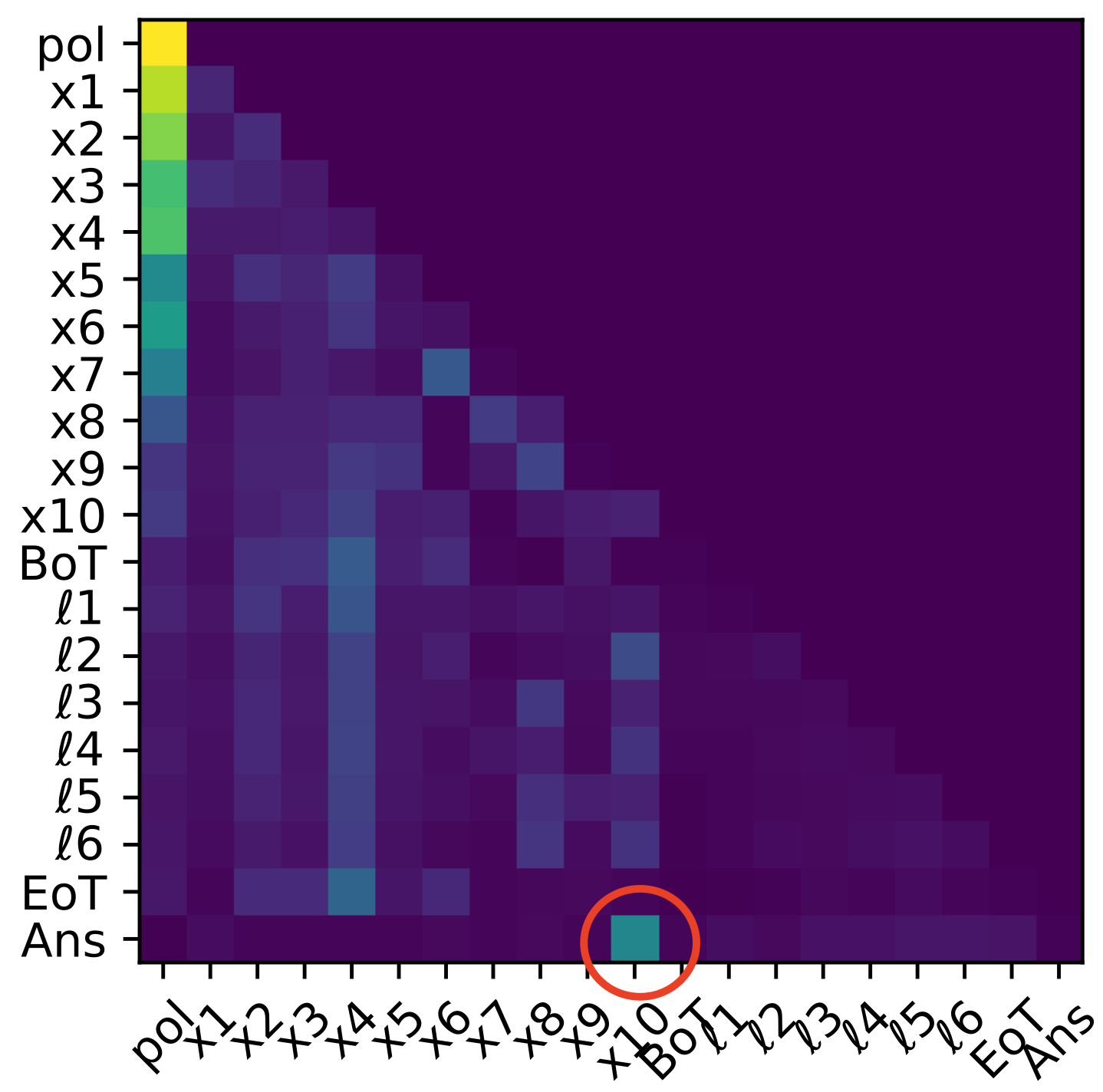}
    \caption{9-hops task}
  \end{subfigure}

  \caption{\textbf{Specialized attention head on the $n$-hop polynomial task}. The \texttt{[Ans]} token attends strongly to the final input token, $x_{n+1}$, suggesting a generalized pathway in which $x_{n+1}$ is directly routed to \texttt{[Ans]}.
}
  \label{fig:n_hops_attn_head}
\end{figure}

\section{Ablation Study}

We evaluate robustness on long-horizon instances ($n=31$) with a 3-layer, 2-head student and $m=50$. Varying the number of latent steps ($p\in\{1,2,3,6,9,12,20\}$) and sweeping the architecture (2--7 layers, 2--8 heads) does not change our qualitative mechanistic picture: CODI primarily encodes and routes late intermediates ($s_n$, or both $s_{n-1}, s_n$) and increasing $p$ does not reliably induce a deeper rollout. In loss ablations, removing feature distillation alone preserves these signatures, whereas removing both distillation and the teacher objective eliminates them, indicating that the teacher loss is the key driver of the late-bottleneck latent mechanism. More details are provided Appendix \ref{sec: ablation_study}.

\section{Discussion}

\textbf{Comparison with Non-CoT.}
A standard next-token transformer trained \emph{without} an explicit or latent CoT channel can nevertheless learn a full internal rollout on this task.
In the three-layer, two-head model, the intermediate state $s_i$ becomes decodable at the next input position $x_{i+1}$ (Appendix~\ref{app:logit_len_Non_CoT}, \cref{fig:non_cot_layer3_head2}), consistent with an implicit iterative update carried in the residual stream despite supervision only on the final answer.
This step-by-step trace is \emph{brittle}, however: it can vanish under modest architectural or optimization changes (\cref{fig:non_cot_layer5_head2}), where intermediate-state decodability disappears in a five-layer, two-head model. By contrast, CODI under composite moduli exhibits a \emph{late-bottleneck} behavior on long $n$-hop tasks: typically only $s_n$---and occasionally $(s_{n-1}, s_n)$---becomes decodable in the final latent steps.
We attribute this to \emph{teacher-guided compression}: explicit trace supervision pushes the student to compress multi-step computation into a fixed latent-thought budget, and when the task permits short effective memory (\cref{sec:prime-vs-composite}), this pressure favors late-only rollouts.

\textbf{Compare with CoT.} In contrast to both the non-CoT baseline and CODI-style latent CoT, an explicitly CoT-trained transformer can reliably recover the full intermediate trajectory $s_1,\ldots,s_n$ across model scales and optimization settings \cite{cabannes2024iteration}. Consistent with this, \Cref{tab:accuracy_across_m} shows that CoT training substantially outperforms both CODI and a standard non-CoT transformer when the modulus $m$ is prime. These results expose a key limitation of CODI-style latent CoT on strictly sequential tasks: with a fixed latent-compute budget, CODI can struggle when the underlying computation is effectively incompressible. This mirrors prior observations that implicit reasoning in standard transformers is constrained by model depth; latent-CoT models inherit a similar constraint, now jointly bounded by architectural depth and the number of latent-thought steps. Conversely, CODI recovers much of the performance gap in more compressible regimes, such as composite moduli. 

\textbf{Composite Rings as a Mechanistic Lens on Latent Reasoning Compression.} Studying composite moduli is valuable because their built-in $d$-to-$1$ contractions offer a clean analogue of a common feature of language: distinct underlying interpretations can map to similar surface forms. Such ambiguity can reduce the effective value of preserving fine-grained early context once later cues dominate. Composite moduli make this explicit: when an update uses a non-unit multiplier, it collapses $d$ distinct states into one, inducing genuine information loss and shifting the final answer toward dependence on a short terminal suffix. This perspective clarifies when latent CoT is advantageous: in settings with many-to-one mapping from histories (or hidden states) to the observed target/label, additional internal steps may be better used to \emph{compress} history than to preserve it exactly, and teacher-guided latent CoT can act as a structured bottleneck that distills multi-step traces into a small number of latent updates that retain only the intermediates most predictive of the answer (in our polynomial setting, typically the final one or two). We view this as a hypothesis for how latent CoT may help in more naturalistic settings.

\section{Conclusion}

We mechanistically analyzed CODI on polynomial-iteration tasks. With composite moduli, CODI is step-by-step on 2--3 hops but shifts to a late-intermediate partial rollout on longer horizons; with prime moduli, it largely fails to learn and shows neither signature. Our theory explains this split via \emph{compressibility}: composite moduli induce many-to-one contractions that bias the label toward late updates, making the computation effectively compressible and enabling CODI to succeed with a fixed latent budget. Prime-modulus instances lack this structure and are substantially less compressible. These findings clarify when CODI-style latent CoT yields faithful iterative computation versus compressed late-stage solutions, and they highlight a key failure mode on sequential tasks that cannot be stably compressed. 


Looking ahead, a natural next step is to test whether the compressibility dependence we observe (including the prime--composite split) persists across latent-CoT objectives and architectures, and to develop adaptive mechanisms that allocate latent compute to match task demands. Extending our mechanistic toolbox to more sequential and naturalistic datasets may further clarify when latent reasoning implements faithful multi-step computation versus shortcut- or compression-driven strategies.

\section*{Impact Statement}

Mechanistic interpretability aims to understand the computations implemented by deep learning systems by linking behavior to internal representations and circuits. Because latent chain-of-thought moves reasoning into hidden activations, mechanistic study can improve safety and reliability by revealing when models carry out faithful multi-step computation versus shortcut or compression-driven strategies, and by informing the design of more predictable objectives. At the same time, such understanding could also enable more capable models by making internal computation more efficient, potentially amplifying broader societal issues associated with advanced AI systems; these issues are beyond the scope of this statement. Our experiments are conducted in a deliberately controlled synthetic setting (polynomial-iteration tasks over modular arithmetic) with randomly sampled inputs and ground-truth intermediate states; the data contain no real-world entities or personal information, and no human subjects are involved, so we do not anticipate privacy, consent, or representational harms originating from the dataset.


\nocite{langley00}

\bibliography{example_paper}
\bibliographystyle{icml2026}

\newpage
\appendix
\onecolumn

\section{Logit Lens}
\label{sec:logit_lens}

\paragraph{Goal.}
We use the \emph{logit lens} to test whether an internal representation already contains the correct intermediate state for the polynomial iteration task.
The logit lens is a lightweight diagnostic: it applies the model's own output decoder (the \emph{unembedding}) to hidden states from earlier layers and positions, yielding a distribution over discrete tokens/states that can be compared against ground truth.
In our setting, this lets us localize \emph{when} the model commits to the correct intermediate value and whether that commitment emerges progressively over latent computation (consistent with iterative updates) or unusually early (suggesting shortcut behavior), which is particularly informative for CODI-style latent reasoning.

\paragraph{Background: output logits.}
For a decoder-only Transformer with residual-stream vectors \(h^{(L)}_t \in \mathbb{R}^{d}\) at position \(t\) after the final layer \(L\), next-token logits are typically computed as
\begin{equation}
z_t \;=\; W_U\,\mathrm{LN}\!\left(h^{(L)}_t\right) + b_U,
\qquad
p(x_t=v \mid x_{<t}) \;=\; \mathrm{softmax}(z_t)_v,
\end{equation}
where \(W_U \in \mathbb{R}^{|V|\times d}\) is the unembedding matrix, \(b_U\) is a bias, \(\mathrm{LN}(\cdot)\) denotes the final layer normalization used before decoding, and \(V\) is the vocabulary.

\paragraph{Logit lens construction.}
At a chosen intermediate layer \(l\) and token position \(t\) (e.g., a pre-answer boundary token or a specific state/thought position), we take the residual stream \(h^{(l)}_t\) and map it to a distribution using the same decoder:
\begin{equation}
\tilde{z}^{(l)}_t \;=\; W_U\,\mathrm{LN}\!\left(h^{(l)}_t\right) + b_U,
\qquad
\tilde{p}^{(l)}_t \;=\; \mathrm{softmax}\!\left(\tilde{z}^{(l)}_t\right).
\end{equation}
We interpret \(\tilde{p}^{(l)}_t\) as: \emph{if the model were forced to decode a token from layer \(l\) at position \(t\) using its final unembedding (or an equivalent lightweight readout), which discrete value would it predict?}

\paragraph{Metrics \& Visualization.}
For the polynomial sequential task, given the ground-truth intermediate state \(s^\star\) (e.g., \(s_t\) or the final \(s_n\) in an \(n\)-hop instance), we measure the logit-lens probability assigned to \(s^\star\), i.e., \(\tilde{p}^{(l)}_t(s^\star)\).
We plot \(\tilde{p}^{(l)}_t(s^\star)\) (averaged over examples and all layers) as a function of token position \(t\), including CODI latent-thought positions, to localize \emph{when} the correct intermediate value becomes decodable and to distinguish progressive latent construction from early commitment.

\section{Attention Maps}
\label{app:attention_maps}

\paragraph{What an attention map shows.}
An \emph{attention map} visualizes where a self-attention head reads from when updating each token representation.
For a fixed layer \(l\) and head \(h\), each query position \(t\) forms a weighted average of information from (allowed) key/value positions \(j\).
The attention map is the matrix of these normalized weights, with rows corresponding to queries \(t\) and columns to keys \(j\).

\paragraph{Self-attention mechanics.}
Let \(x_t \in \mathbb{R}^{d}\) denote the residual-stream vector at position \(t\) entering the attention sublayer.
A single head constructs
\begin{equation}
q_t = W_Q x_t,\qquad k_j = W_K x_j,\qquad v_j = W_V x_j,
\end{equation}
and computes scaled dot-product attention scores followed by a softmax:
\begin{equation}
\alpha_{t \rightarrow j}
=\mathrm{softmax}_{j}\!\left(\frac{q_t^\top k_j}{\sqrt{d_k}} + m_{t j}\right),
\end{equation}
where \(m_{t j}\) is an attention mask (e.g., a causal mask sets \(m_{t j}=-\infty\) for future positions \(j>t\)).
The head output at position \(t\) is
\begin{equation}
\mathrm{Attn}(x)_t
=\sum_{j} \alpha_{t \rightarrow j}\, v_j.
\end{equation}
Stacking \(\alpha_{t\rightarrow j}\) over all \(t\) and \(j\) yields an attention matrix
\(A^{(l,h)} \in \mathbb{R}^{T \times T}\), where each row sums to \(1\).

\paragraph{How we visualize it.}
We plot \(A^{(l,h)}\) as a heatmap: \emph{rows} index query positions \(t\) (the positions being updated) and \emph{columns} index key/value positions \(j\) (the positions being attended to).
Bright entries indicate large \(\alpha_{t\rightarrow j}\), i.e., strong routing from position \(j\) into the update at position \(t\).
In our setting, positions include input tokens, structural boundary tokens (e.g., \texttt{[BoT]}, \texttt{[EoI]}, \texttt{[Ans]}), and (for CODI) continuous thought tokens ($[\ell_1]...[\ell_6]$. Unless otherwise noted, we visualize individual heads and often average \(A^{(l,h)}\) over a batch of examples to highlight stable routing structure. We annotate rows/columns by token type (input vs.\ thought vs.\ boundary) to clarify information flow. 

\cref{fig:2hop_L3H2_attn_map} visualizes attention maps from the student of a 3-layer, 2-head Transformer trained with the CODI objective on the two-hop polynomial task. Several heads exhibit strong attention from the \texttt{[Ans]} position (immediately before answer generation) to the final input token \(x_3\), consistent with a copy-like pathway that routes \(x_3\) directly into the \texttt{[Ans]} residual stream.

\begin{figure}[ht]
  \begin{center}
    \centerline{\includegraphics[width=0.6\columnwidth]{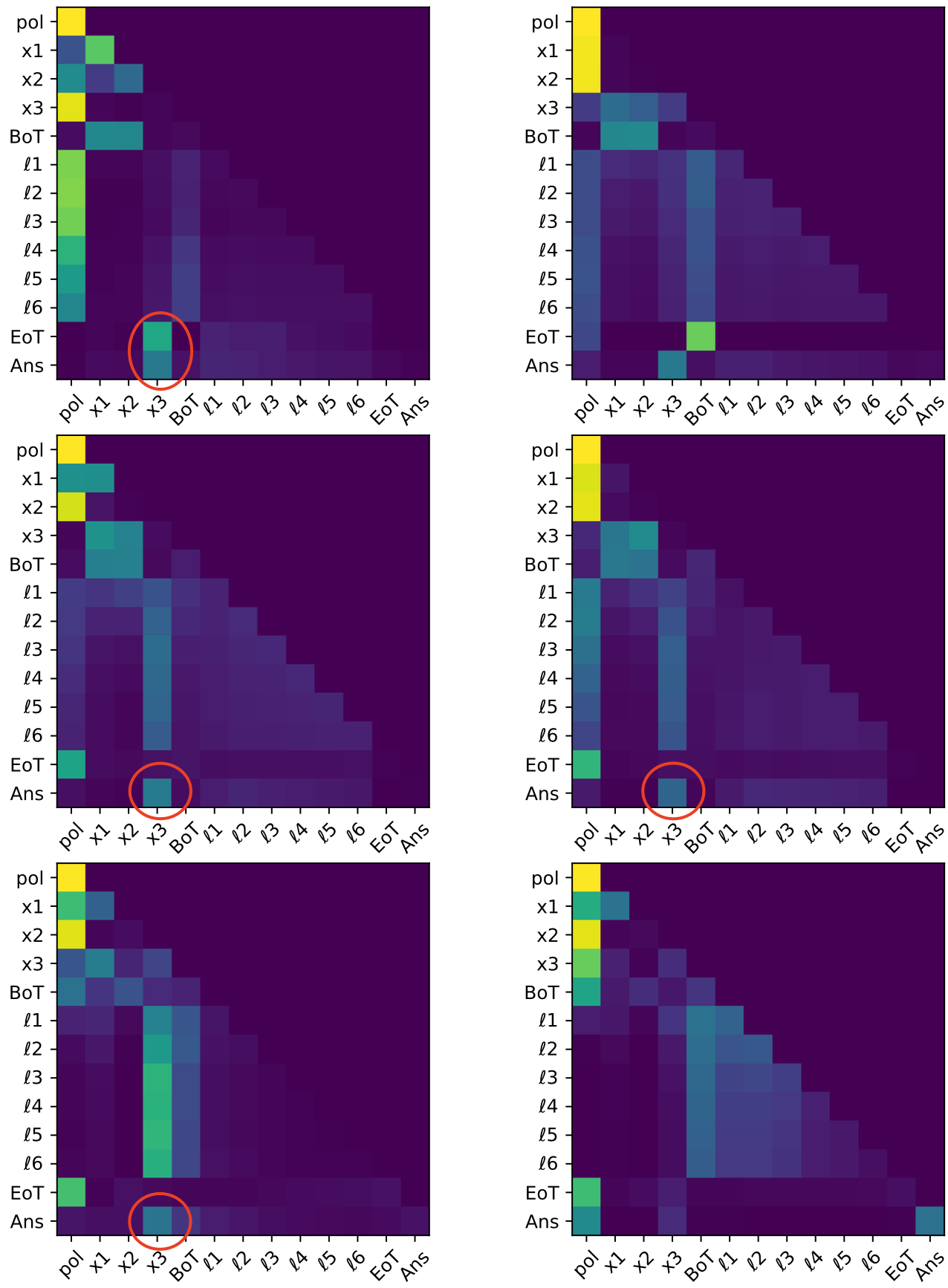}}
    \caption{
\textbf{Attention maps for the 3-layer, 2-head transformer on the two-hop polynomial task.}
Rows correspond to layers (Layer 1 on top through Layer 3 on bottom), and columns correspond to heads (Head 1 on the left, Head 2 on the right). Across multiple heads, the \texttt{[Ans]} token places strong attention directly on the $x_3$ position (circled in red ovals), consistent with a copy-like pathway that routes $x_3$ into the \texttt{[Ans]} residual stream. 
}
    \label{fig:2hop_L3H2_attn_map}
  \end{center}
\end{figure}

\section{Probing Intermediate Representations}
\label{app:probing}

\paragraph{Goal.}
\emph{Probing} is a diagnostic technique for quantifying what information is present in a model's hidden states.
Given intermediate activations (e.g., residual-stream vectors) produced while the model processes an input, we train a simple supervised predictor (a \emph{probe}) to recover a task-relevant variable from those activations.
If a low-capacity probe can accurately predict a variable (e.g., an intermediate state), this provides evidence that the variable is encoded in the representation in an easily accessible (often approximately linear) form.

\paragraph{Representations and targets.}
For each example, we extract hidden states \(h^{(l)}_t \in \mathbb{R}^d\) at layer \(l\) and token position \(t\).
In the polynomial sequential task, we train probes to predict task-relevant variables, including the current state \(s_t\), the next state \(s_{t+1}\), the final answer \(s_{n+1}\), and (for an \(n\)-hop instance) each input token \(x_1,\ldots,x_{n+1}\).
We evaluate probes across token types/positions, including input tokens, CODI continuous thought tokens, and pre-answer boundary tokens.

\subsection{Linear Probing Implementation Details}
\label{app:linear_probing_impl}

\paragraph{Overview.}
We use linear probes to quantify which task variables are linearly decodable from CODI's internal representations.
Probes are trained on frozen activations extracted from runs where the model predicts the correct final answer, and we report held-out classification accuracy.

\paragraph{Probe architecture.}
Each probe is a single linear classifier with no bias and no nonlinearity:
\begin{equation}
f(h)=Wh,
\end{equation}
where \(h \in \mathbb{R}^{256}\) is the hidden vector and \(W \in \mathbb{R}^{50 \times 256}\).
The 50 classes correspond to discrete values \(\{0,1,\ldots,49\}\).

\paragraph{Activation extraction (where we probe).}
For a sequence length \(n\), we probe representations at the following positions:
\begin{itemize}
  \item \textbf{Input positions:} $x_i$ for \(i \in \{1,\ldots,n+1\}\) (residual stream at each input token)
  \item \textbf{BoT (Beginning of Thought):} final encoder position before latent processing
  \item \textbf{Latent thought tokens:} \(\ell_j\) for \(j \in \{1,\ldots,6\}\) (six learned latent tokens)
  \item \textbf{EoT (End of Thought):} first decoder position after latent processing
  \item \textbf{Ans:} decoder position used for answer generation
\end{itemize}
At each position, we probe the residual stream at four depths: before the first layer (\texttt{L1-Pre}) and after layer 1, 2, 3 (\texttt{L\{0,1,2\}-post}).

\paragraph{Probe targets (what we decode).}
We train separate probes for each of the following ground-truth labels:
\begin{itemize}
  \item \textbf{Inputs:} \(x_1,\ldots,x_n\) with \(x_i \in \{1,\ldots,49\}\)
  \item \textbf{Intermediate states:} \(s_1,\ldots,s_{n-1}\) with \(s_i \in \{0,\ldots,49\}\)
  \item \textbf{Final answer:} \(\texttt{ans} \in \{0,\ldots,49\}\)
\end{itemize}

\paragraph{Dataset and filtering.}
For each sequence length, we extract activations from approximately \(5{,}000\) test examples.
To focus on representations associated with successful computation, we retain only examples where the model's final prediction is correct, yielding \(N_{\text{correct}}\) samples.

\paragraph{Train/validation/test splits.}
We split \(N_{\text{correct}}\) into 80\% train and 20\% test.
The training portion is further split into 80\% for optimization and 20\% for validation, giving an overall 64\%/16\%/20\% train/val/test split.

\paragraph{Optimization.}
Each probe is trained independently using Adam~\citep{kingma2014adam} (learning rate \(10^{-3}\)), batch size 64, for 100 epochs, minimizing cross-entropy loss.
We select the checkpoint with the highest validation accuracy and report test accuracy for that probe.

\paragraph{Number of probes.}
For sequence length \(n\), we train probes over:
(i) \textbf{locations:} \(n\) encoder positions + 1 BoT + 6 latents + 1 EoT + 1 Ans \(= n+9\);
(ii) \textbf{depths:} 4 residual-stream depths; and
(iii) \textbf{labels:} \(n\) inputs + \((n-1)\) intermediate states + 1 answer \(=2n\).
This yields \((n+9)\times 4 \times 2n\) probe fits per sequence length (e.g., 896 probes when \(n=7\)).

\paragraph{Metric.}
We report classification accuracy on the held-out test split.
Chance performance is \(1/50 = 0.02\); accuracy near 1.0 indicates the target variable is fully linearly decodable from the probed representation.

\paragraph{Probing Visualization.}
\cref{fig:probing_seq_3} and \cref{fig:2_hop_probing_x3} visualize linear-probe results on the two-hop polynomial task (modulus 50), with probe targets \(s_2\) (the intermediate bridge state) and \(x_3\) (the final input), respectively. Each cell reports classification accuracy on the held-out test split; the \(x\)-axis indexes the token positions from which activations are extracted, and the \(y\)-axis indexes the four probed residual-stream depths at each position. In \cref{fig:probing_seq_3}, \(s_2\) is highly decodable across the latent thought tokens \([\ell_1]\)–\([\ell_6]\), indicating that the latent channel carries the intermediate-state information. In contrast, \cref{fig:2_hop_probing_x3} shows that \(x_3\) is largely not decodable within the latent tokens, but becomes strongly decodable at \texttt{[EoT]} and \texttt{[Ans]}. This pattern is consistent with the copy-like route suggested by the attention maps, where \(x_3\) is routed directly into the final readout.

\begin{figure}[ht]
  \begin{center}
    \centerline{\includegraphics[width=0.75\columnwidth]{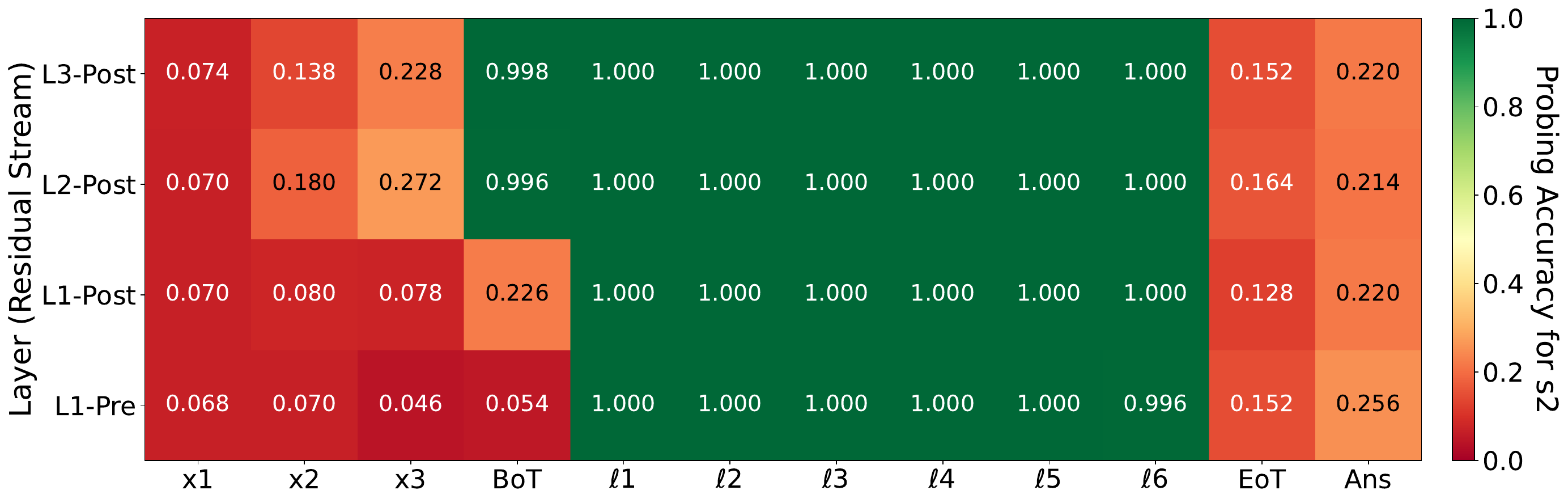}}
    \caption{
\textbf{Probing the intermediate state $s_2$ in the two-hop polynomial task.}
Consistent with the logit-lens results, $s_2$ is readily decodable throughout the latent computation, indicating that the bridge state is formed before the model produces the final answer and supporting a sequential reasoning strategy.}
    \label{fig:probing_seq_3}
  \end{center}
\end{figure}

\begin{figure}[ht]
  \begin{center}
    \centerline{\includegraphics[width=0.75\columnwidth]{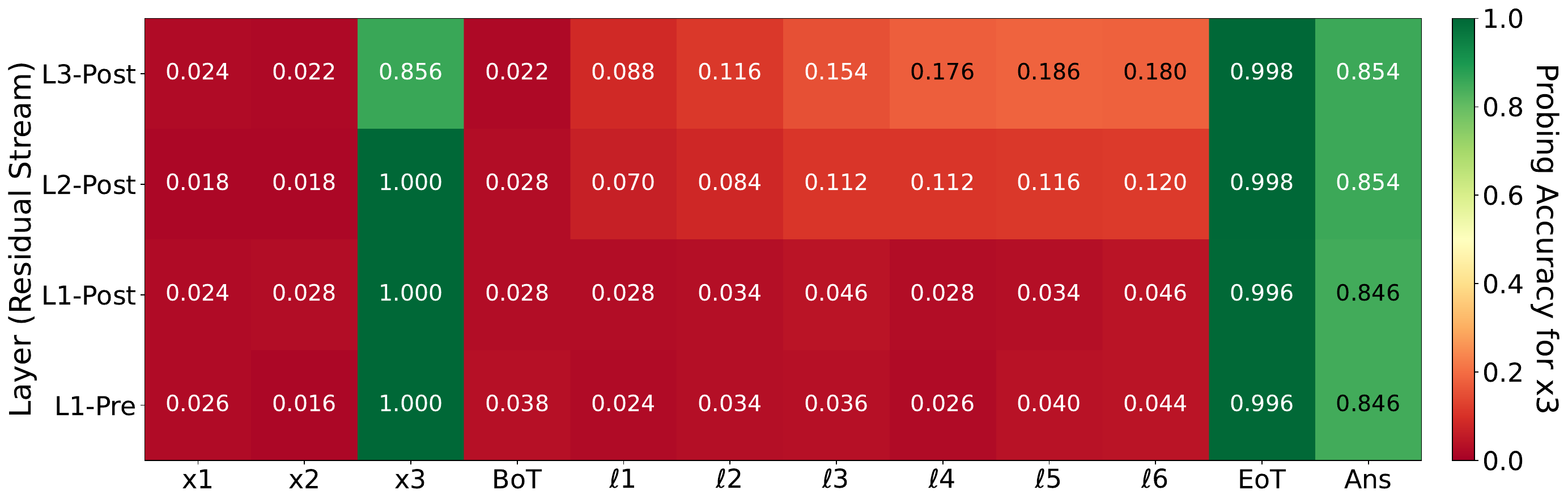}}
    \caption{
\textbf{Probing the input token $x_3$ in the two-hop polynomial task.}
$x_3$ is strongly decodable at its own position and at the \texttt{[EoT]} and \texttt{[Ans]} tokens (probe confidence $\approx 1$ at $x_3$ and \texttt{[EoT]}, and $\approx 0.85$ at \texttt{[Ans]}). Consistent with the attention pattern in \cref{fig:2hop_L3H2_attn_map}, this supports a direct routing (copy-like) pathway from $x_3$ into the \texttt{[EoT]} and \texttt{[Ans]} residual streams.}
    \label{fig:2_hop_probing_x3}
  \end{center}
\end{figure}

\section{Activation Patching}
\label{app:activation_patching}

\paragraph{Goal.}
\emph{Activation patching} is a causal intervention used to identify which internal components of a model are necessary (or sufficient) for a particular behavior.
The core idea is to compare a \emph{clean} run of the model (which produces the correct answer) to a \emph{corrupted} run (where an input is perturbed and the model fails), and then \emph{replace} a chosen internal activation in the corrupted run with the corresponding activation from the clean run.
If this replacement restores the correct output, it provides causal evidence that the patched activation carries information that the model needs to solve the task.

\subsection{Activation Patching Implementation Details}

\paragraph{Overview.}
We employ activation patching~\citep{meng2022locating,wanginterpretability}, also known as causal tracing or interchange intervention, to identify which model components are causally necessary for correct predictions. Our implementation extends standard activation patching to handle the encoder-latent-decoder architecture with multiple latent reasoning tokens.

\paragraph{Clean and Corrupted Runs.}
For each test example with input sequence $\mathbf{x} = (x_1, \ldots, x_n)$ and correct answer $y$, we define:

\begin{itemize}
    \item \textbf{Clean run}: Forward pass with correct inputs, storing all intermediate activations $\mathcal{A}^{\text{clean}} = \{a_{\ell}^{(p)} : \ell \in \mathcal{L}, p \in \mathcal{P}\}$, where $\mathcal{L}$ indexes layers and $\mathcal{P}$ indexes positions/phases.
    \item \textbf{Corrupted run}: Forward pass with perturbed input $\tilde{\mathbf{x}}$, where we corrupt one token: $\tilde{x}_i = (2 \cdot x_i) \bmod 50$ for some position $i$.
\end{itemize}

We cache activations from three computation phases: encoder positions $\{1, \ldots, n\}$, latent tokens $\{\ell_1, \ldots, \ell_6\}$, and decoder positions $\{\text{EoT}, \text{Ans}\}$.

\paragraph{Intervention Procedure.}
Given corrupted input $\tilde{\mathbf{x}}$ and clean activation cache $\mathcal{A}^{\text{clean}}$, we perform the following intervention at component $c = (\ell, p)$ (layer $\ell$, position/phase $p$):

\begin{equation}
a_{\ell}^{(p)}(\tilde{\mathbf{x}}) \leftarrow a_{\ell}^{(p)}(\mathbf{x})
\end{equation}

We implement this via forward hooks in TransformerLens~\citep{nanda2022transformerlens}. For position-specific patching at encoder token $i$ or decoder position $j$, we patch only the corresponding slice: $a_{\ell, i}^{\text{enc}}(\tilde{\mathbf{x}}) \leftarrow a_{\ell, i}^{\text{enc}}(\mathbf{x})$.

\paragraph{Components Analyzed.}
We systematically patch residual stream activations at the following components:
\begin{itemize}[leftmargin=*,itemsep=0pt]
    \item \textbf{Layers}: Pre-residual (before layer 1: \texttt{L1-pre}) and post-residual after each layer $k \in \{ 1, 2, 3 \}$ (\texttt{Lk-post})
    \item \textbf{Phases}: For sequence length $n$, we test $n$ encoder positions + 1 BoT position + 6 latent tokens + 2 decoder positions, yielding $n+9$ phases
\end{itemize}

For a model with $L=3$ layers and sequence length $n$, this produces $4 \times (n+9)$ total intervention experiments per corrupted example.

\paragraph{Metrics.}
For each intervention, we compute:
\begin{align}
\text{Baseline} &= \mathbb{P}(\hat{y} = y \mid \tilde{\mathbf{x}}) \quad \text{(no patching)} \\
\text{Patched}_{c} &= \mathbb{P}(\hat{y} = y \mid \tilde{\mathbf{x}}, \text{patch at } c) \\
\text{Clean} &= \mathbb{P}(\hat{y} = y \mid \mathbf{x}) \quad \text{(upper bound)} \\
\text{Lift}_{c} &= \frac{\text{Patched}_{c} - \text{Baseline}}{\text{Clean} - \text{Baseline}} \, .
\end{align}

The \emph{lift} metric normalizes the recovery to $[0, 1]$, where $\text{Lift}_c = 1$ indicates complete recovery and $\text{Lift}_c = 0$ indicates no improvement. We report lift as the primary metric, as it is invariant to baseline difficulty and allows comparison across different corruptions.

\paragraph{Evaluation Protocol.}
We filter test examples to include only those where the model initially predicts correctly ($\mathbb{P}(\hat{y} = y \mid \mathbf{x}) = 1$), ensuring clean runs provide a valid counterfactual. We corrupt each input position independently, running separate patching experiments for $x_1, x_2, \ldots, x_n$. For each corruption, we compute the mean accuracy and lift over all correctly-predicted examples. 

\subsection{Visualization Details.} 

\cref{fig:activation_pathcing_x1_seq_3,fig:activation_pathcing_x2_seq_3} report activation-patching lift for runs corrupted at \(x_2\) and \(x_3\), respectively. The \(x\)-axis indexes the token position (inputs and latent tokens) where we patch, and the \(y\)-axis indexes the probed residual-stream depth. Each cell shows mean percent recovery, \(\mathrm{Acc\ Recovery} = 100\% \cdot \mathrm{Lift}_c\), averaged over clean-correct examples. In \cref{fig:activation_pathcing_x1_seq_3}, patching into the \(x_2\)-corrupted run yields the largest recovery at latent positions \([\ell_1]\) and \([\ell_2]\), indicating that early latent steps carry causally necessary intermediate information. In contrast, \cref{fig:activation_pathcing_x2_seq_3} shows negligible recovery from patching at latent positions for the \(x_3\)-corrupted run, while patching at \texttt{[Ans]} produces strong recovery, consistent with a copy-like route from the final input \(x_3\) to the \texttt{[Ans]} readout.

\section{Training Configuration}
\label{app:CODI_training_detail}

We train a 3-layer transformer with 2 attention heads on the polynomial reasoning dataset. The model uses 6 latent tokens and a 256-dimensional projection layer. We train for 1{,}000 epochs with batch size 256, using AdamW with learning rate $3\times 10^{-4}$ and a cosine annealing schedule. We use a warmup ratio of 0.03, weight decay of 0.1, and gradient clipping with max norm 2.0. The objective combines cross-entropy and distillation losses with equal weights (1.0 each); the distillation loss is normalized by its standard deviation. Each run uses 2{,}500 examples per sequence length. For an $n$-hop task, we train on a curriculum of sequence lengths $1,2,\ldots,n{+}1$ (e.g., 4-hop training includes lengths 1--5). All models are trained in BF16 precision for computational efficiency.

\section{Mechanistic Analysis on Three-hop Polynomial  task.}
\label{app:three_hop_analysis}

From the logit-lens analysis in \cref{fig:logic_lens_3hop_L3H2}, we observe a clear temporal progression of intermediate-state decodability across latent steps. The first state \(s_1\) becomes decodable early in the latent trajectory (specifically from \texttt{[BoT]} through \texttt{[L6]} in this setting). Later in the trajectory, the second intermediate state \(s_2\) becomes decodable at the \texttt{[EoT]} token. Together, these patterns are consistent with step-by-step computation on the three-hop polynomial task.

Activation patching provides causal support for this interpretation. As shown in \cref{fig:3_hop_patch_x1,fig:3_hop_patch_x2}, patching from a correct run into an \(x_1\)-corrupted or \(x_2\)-corrupted run yields substantial accuracy recovery primarily in early latent steps (\(l_1\) and \(l_2\)). Because \(s_2\) is computed from \(x_1\) and \(x_2\), this indicates that the latent representation supporting \(s_2\) is causally relevant in these steps. Moreover, \cref{fig:3_hop_patch_x3} shows that patching \(x_3\) has its strongest effect at \texttt{[EoT]}, consistent with \(s_3\) being formed and used late in the trajectory. Overall, both analyses support a sequential, step-by-step reasoning process.

It is important to note that this step-by-step reasoning pattern does not emerge uniformly across all polynomial-task runs. In some cases, the model exhibits a \emph{partial} latent trajectory: only \(s_3\) becomes decodable in the latent steps, without clear evidence of \(s_2\) formation. That said, across the different random seeds and small model architectures we tested, the full step-by-step pattern appears reliably in the majority of settings.

\begin{figure*}[ht]
  \centering
  \begin{subfigure}[t]{0.49\textwidth}
    \centering
    \includegraphics[width=\linewidth]{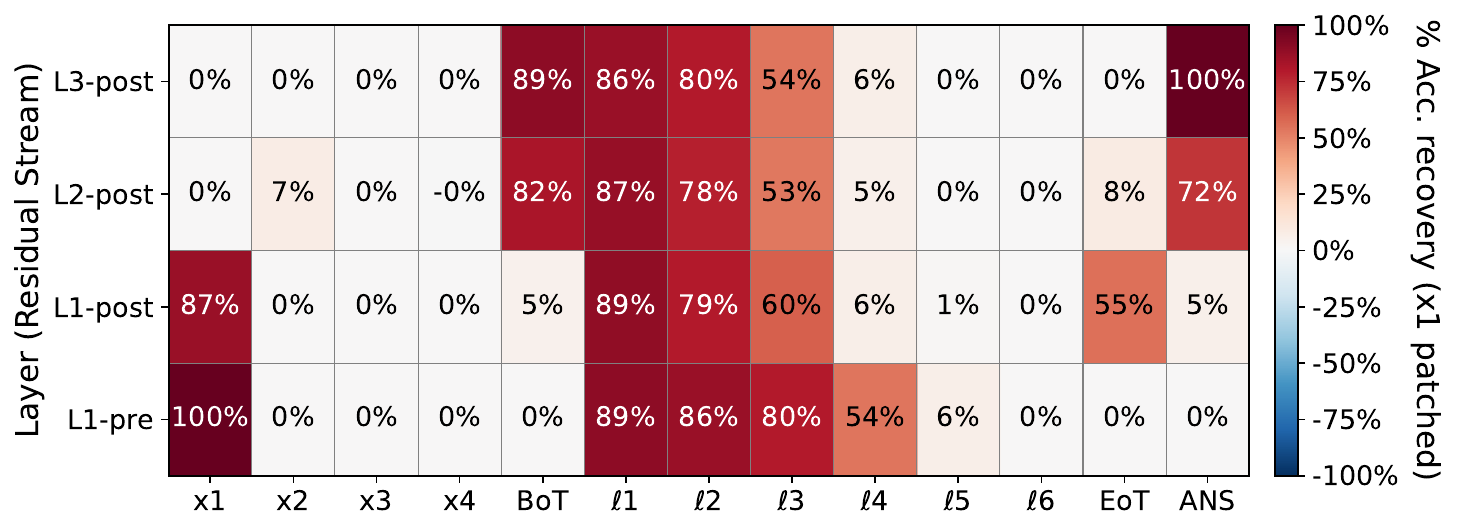}
    \caption{patching $x_1-$ corrupted run}
    \label{fig:3_hop_patch_x1}
  \end{subfigure}\hfill
  \begin{subfigure}[t]{0.49\textwidth}
    \centering
    \includegraphics[width=\linewidth]{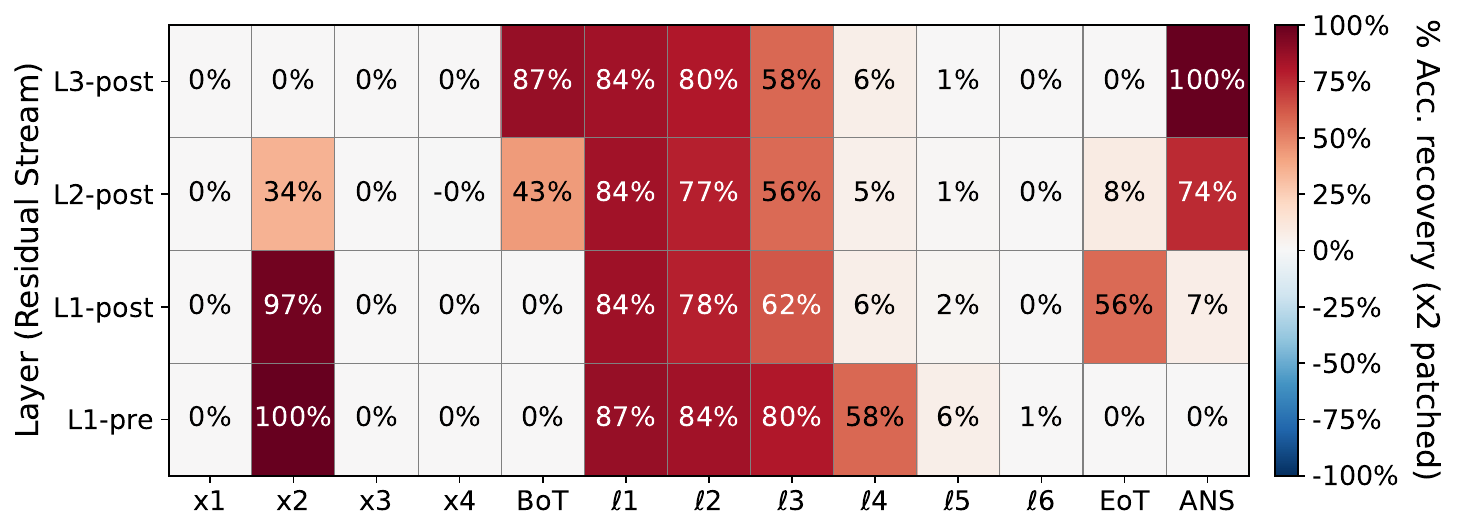}
    \caption{patching $x_2-$ corrupted run}
    \label{fig:3_hop_patch_x2}
  \end{subfigure}

  \vspace{0.5em} 

  \begin{subfigure}[t]{0.49\textwidth}
    \centering
    \includegraphics[width=\linewidth]{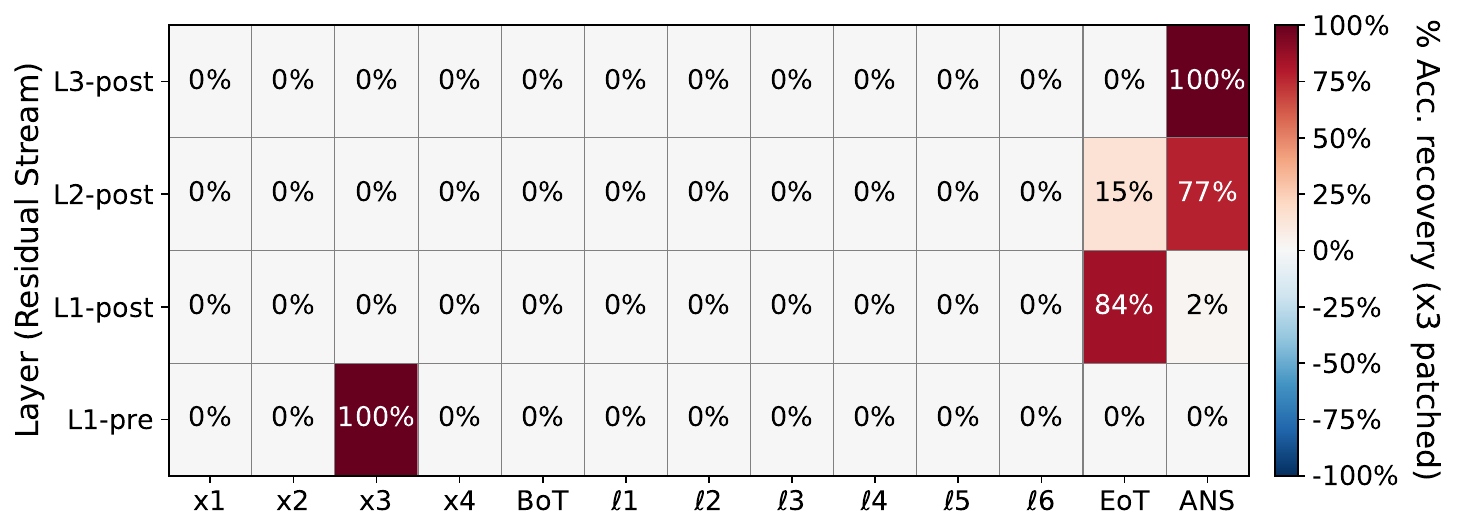}
    \caption{patching $x_3-$ corrupted run}
    \label{fig:3_hop_patch_x3}
  \end{subfigure}\hfill
  \begin{subfigure}[t]{0.49\textwidth}
    \centering
    \includegraphics[width=\linewidth]{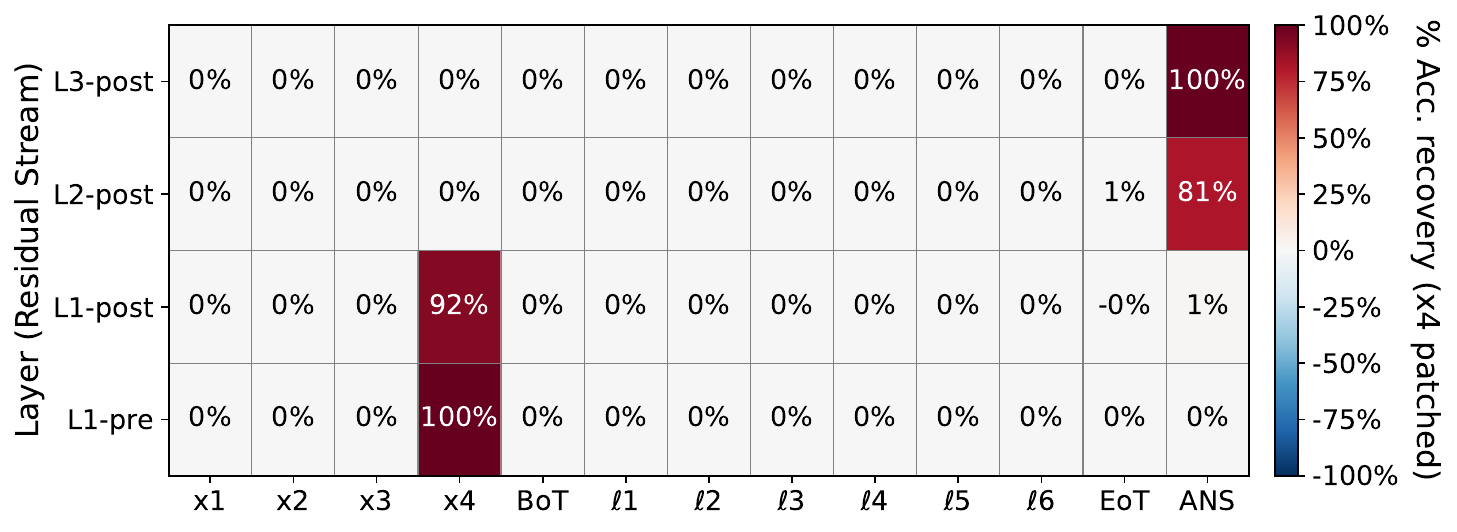}
    \caption{patching $x_4-$ corrupted run}
    \label{fig:3_hop_patch_x4}
  \end{subfigure}

  \caption{\textbf{Activation patching for input-token corruptions in the three-hop polynomial task.}
When $x_1$ or $x_2$ is corrupted, patching clean activations into the latent-thought positions ($\ell_1,\ell_2,\ell_3$) yields substantial accuracy recovery, indicating that the latent channel carries the intermediate information needed downstream (notably $s_2$, which is required to compute $s_3$).
When $x_3$ is corrupted, recovery concentrates at \texttt{[EoT]}, suggesting that $s_3$ is computed or consolidated near the end of the latent segment.
Finally, when $x_4$ is corrupted, recovery localizes at \texttt{[Ans]}, consistent with a direct (copy-like) route that delivers $x_4$ to the answer readout.
}
  \label{fig:patching_3hop_2x2}

\end{figure*}

\begin{figure}[h]
  \centering
  \includegraphics[width=0.75\linewidth]{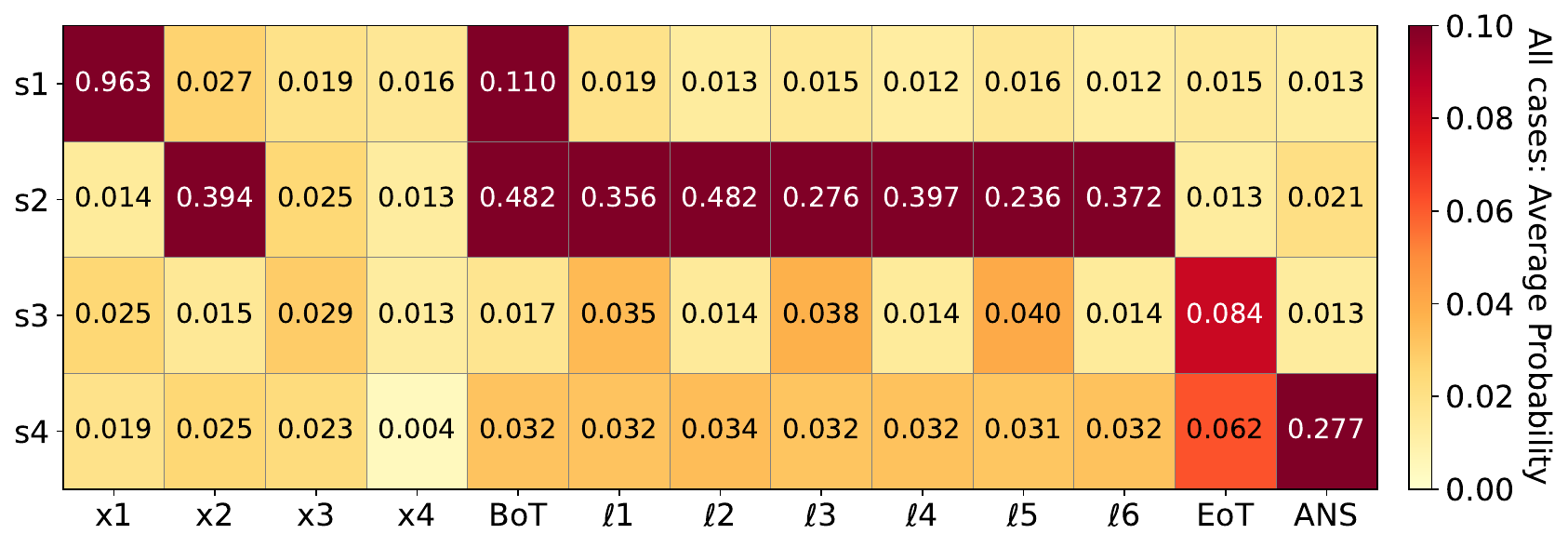}
  \caption{\textbf{Logit lens on intermediate states $s_1,s_2,s_3,s_4$ in the three-hop polynomial task.}
For modulus $m=50$, the logit lens shows that the bridge state $s_2$ becomes decodable early in the latent trajectory, indicating that it is formed and maintained in the latent channel. In contrast, $s_3$ becomes most decodable near the \texttt{[EoT]} boundary, suggesting that the next intermediate is computed or consolidated at the end of the latent segment. Each cell show average decoding probability across all layers and all test inputs.
  }
  \label{fig:logic_lens_3hop_L3H2}
\end{figure}

\section{Partial Latent Rollouts Concentrate on Late Intermediates for Longer Hops.}
\label{app:partial_roll_out_last_two}

As shown in \cref{fig:n_hops_logic_lens}, we previously observed that, on $n$-hop tasks, the final intermediate state $s_n$ often becomes decodable within CODI's latent-thought positions. For longer horizons (typically $n \ge 4$ under composite moduli), we also observe a stronger \emph{two-step} variant of this behavior: both $s_{n-1}$ and $s_n$ become decodable across the latent trajectory, with $s_{n-1}$ emerging earlier and $s_n$ emerging later.

\Cref{fig:5_hop_partial_2_logic_lens} illustrates this pattern on a $5$-hop task: $s_4$ is decodable in the early latent steps, while $s_5$ becomes decodable only toward the end of the latent trajectory. \Cref{fig:7_hop_partial_2_logic_lens} shows an analogous effect for a $7$-hop task, where $s_6$ appears earlier and $s_7$ appears later in the latent steps. This ordering is consistent with the analysis in \cref{sec:prime-vs-composite}: under composite moduli, the task becomes increasingly biased toward the last few updates, making late intermediates disproportionately predictive of the final answer $s_{n+1}$. CODI's latent channel appears to exploit this structure by allocating its limited latent compute to tracking the last one or two intermediates rather than maintaining a full step-by-step rollout.

\begin{figure*}[ht]
  \centering
  \begin{subfigure}[t]{0.49\textwidth}
    \centering
    \includegraphics[width=\linewidth]{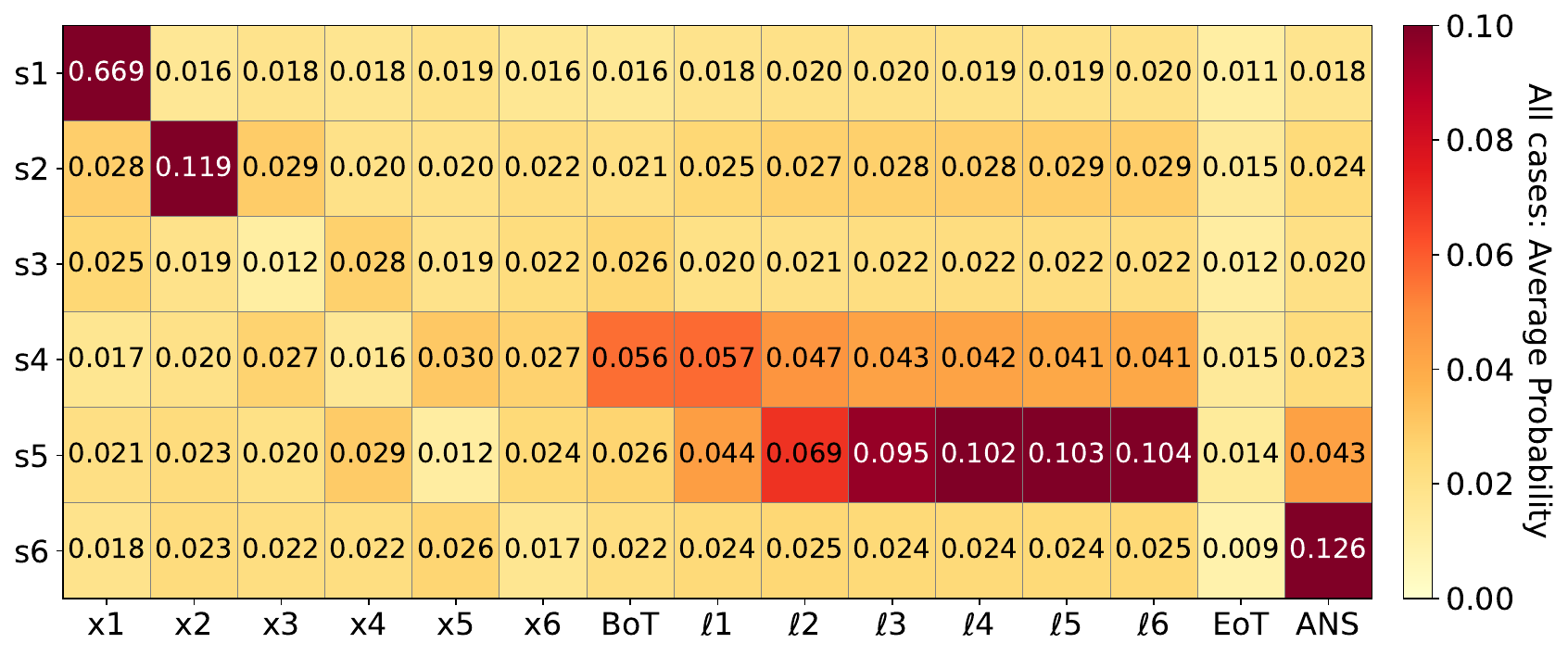}
    \caption{$5$-hop task}
    \label{fig:5_hop_partial_2_logic_lens}
  \end{subfigure}\hfill
  \begin{subfigure}[t]{0.49\textwidth}
    \centering
    \includegraphics[width=\linewidth]{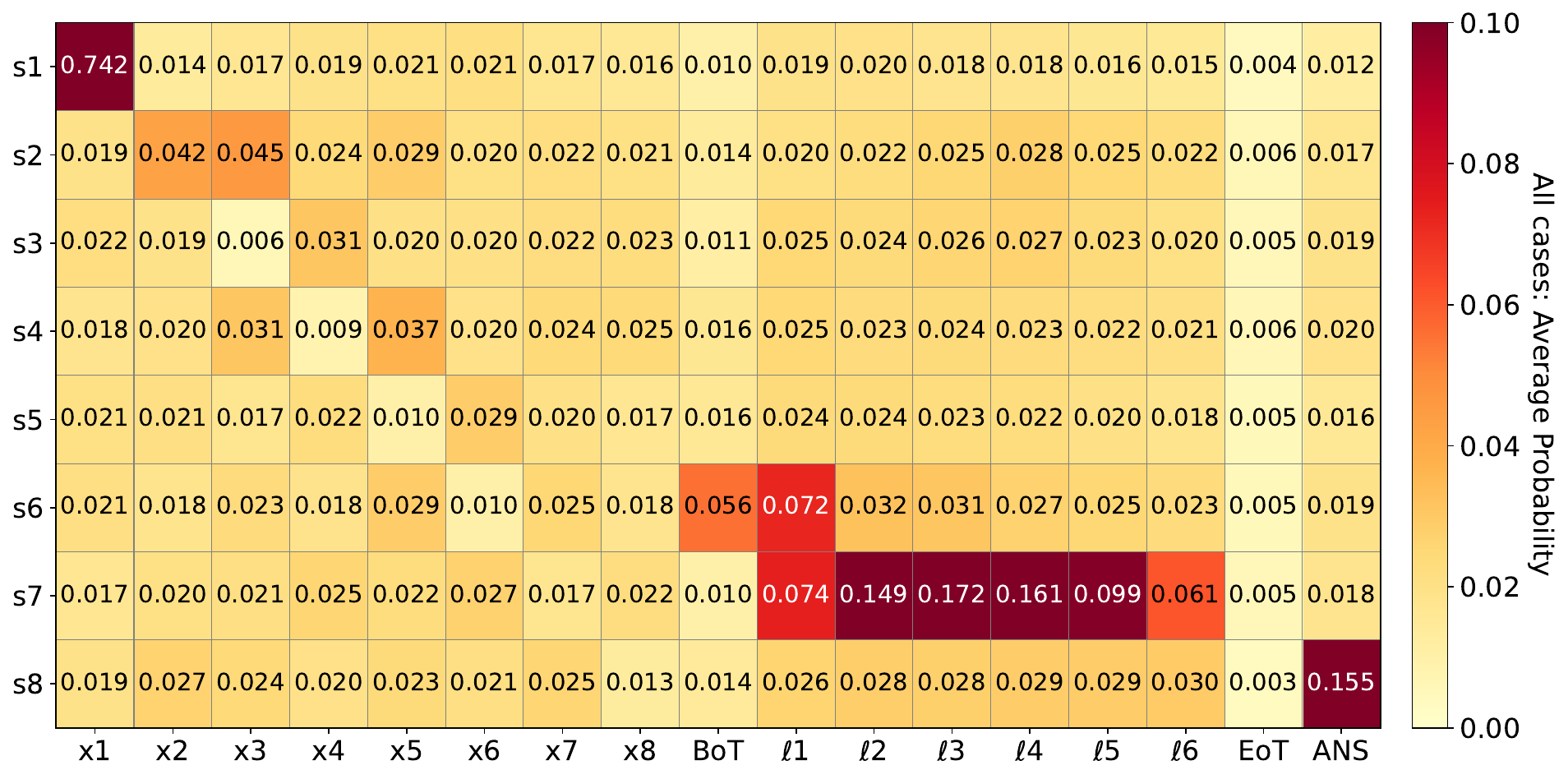}
    \caption{$7$-hop task}
    \label{fig:7_hop_partial_2_logic_lens}
  \end{subfigure}

  \caption{\textbf{Logit-lens evidence for a two-step partial latent rollout.} For longer-hop tasks (composite moduli), CODI often makes the last two intermediate states decodable within the latent-thought trajectory: $s_{n-1}$ appears earlier in the latent steps, followed by $s_n$ later.}
  \label{fig:partial_two_step_logit_lens}
\end{figure*}

\section{Proofs for Theoretical Results in \cref{sec:prime-vs-composite}}

\subsection{Proof of Lemma 5.1:} 
\label{app:lemma_5_1_proof}

\textbf{Lemma 5.1 [Bijection criterion]}: 
For $x\in R_m$, the map $f_x : R_m \to R_m$ is bijective iff $x$ is a unit in $R_m$, i.e.\ $\gcd(x,m)=1$.

\begin{proof}
The additive shift by $b$ is a bijection, so $f_x$ is bijective iff $s\mapsto sx$ is bijective.
Multiplication by $x$ is bijective over $R_m$ iff there exists $x^{-1}$ with $xx^{-1}\equiv 1\ (\mathrm{mod}\ m)$, which holds iff $\gcd(x,m)=1$.
\end{proof}

\subsection{Proof of Lemma 5.2:}
\label{app:lemma_5_2_proof}

\textbf{Lemma 5.2 [Exact contraction factor]}: Let $m\ge 1$ and fix $x,b\in R_m:=\mathbb{Z}/m\mathbb{Z}$. Let $d:=\gcd(x,m)$.
Define $\mu_x:R_m\to R_m$ by $\mu_x(s)=sx\ (\mathrm{mod}\ m)$ and $f_x(s)=sx+b\ (\mathrm{mod}\ m)$.
Then $|\mathrm{im}(\mu_x)|=m/d$, and every fiber of $\mu_x$ has size $d$. Equivalently, $f_x$ is a $d$-to-$1$ map.

\begin{proof}
We view $R_m$ as an additive group. The map $\mu_x$ is a group homomorphism since
\[
\mu_x(s+t)=(s+t)x \equiv sx+tx \equiv \mu_x(s)+\mu_x(t)\pmod m.
\]
Let $K:=\ker(\mu_x)=\{s\in R_m: sx\equiv 0\ (\mathrm{mod}\ m)\}$.
Write $x=dx'$ and $m=dm'$ with $\gcd(x',m')=1$. Then
\[
sx\equiv 0 \pmod m
\iff dm' \mid s(dx')
\iff m' \mid s x'
\iff m' \mid s,
\]
where the last equivalence uses $\gcd(x',m')=1$ (so multiplication by $x'$ is invertible modulo $m'$).
Thus
\[
K=\{0,\, m',\, 2m',\,\dots,\,(d-1)m'\}\subset R_m,
\]
so $|K|=d$.

For a group homomorphism $\varphi:G\to H$, each fiber $\varphi^{-1}(y)$ (for $y\in \mathrm{im}(\varphi)$) is a coset of $\ker(\varphi)$:
if $\varphi(s_0)=y$, then $\varphi^{-1}(y)=s_0+\ker(\varphi)$. Hence every fiber has size $|\ker(\varphi)|$.
Applying this to $\mu_x$, every fiber has size $|K|=d$.

Since $R_m$ is finite, the fibers partition $R_m$ into $|\mathrm{im}(\mu_x)|$ disjoint sets of equal size $d$, so
\[
|\mathrm{im}(\mu_x)|=\frac{|R_m|}{d}=\frac{m}{d}.
\]

Finally, $f_x=\tau_b\circ \mu_x$ where $\tau_b(y)=y+b$ is a bijection on $R_m$.
Therefore $f_x$ has the same fiber sizes as $\mu_x$, i.e., $f_x$ is also $d$-to-$1$.
\end{proof}

\section{Ablation Study}
\label{sec: ablation_study}

For all subsequent ablations, we use the $n$-hop setting with $n=31$, a 3-layer transformer with 2 attention heads per layer (unless stated otherwise), and modulus $m=50$.

\paragraph{Varying the number of latent steps.}

We ablate the number of latent steps over $p\in\{1,2,3,6,9,12,20\}$.
Across all values of $p$ tested, our main mechanistic observations remain mostly stable: on long $n$-hop tasks, the model consistently forms and propagates the \emph{late} intermediate state---most reliably the final state $s_n$, and occasionally the last two states $(s_{n-1}, s_n)$.
This aligns with the analysis in \cref{sec:prime-vs-composite}: under composite moduli, contraction events make the label depend primarily on the terminal suffix, favoring late-bottleneck solutions.
Accordingly, within this range, increasing $p$ does not reliably induce a longer internal rollout; the emergence of latent-step structure appears to be driven more by the task distribution (long-horizon updates under composite $m$) than by the available latent-step budget.

\paragraph{Varying model depth and width.}

We further test architectural robustness by sweeping depth and width, varying the number of layers in $\{2,3,4,5,6,7\}$ and attention heads in $\{2,4,8\}$ (up to a 7-layer, 8-head student).
Across these configurations, our mechanistic picture is mostly unchanged: on long $n$-hop tasks the model primarily represents and routes late intermediates---most reliably $s_n$, and sometimes $(s_{n-1},s_n)$.
This is consistent with \cref{sec:prime-vs-composite}: under composite moduli, frequent contractions make the label depend mainly on the terminal suffix, which naturally favors late-bottleneck solutions.
Consequently, scaling the student up or down within this range does not reliably elicit a deeper internal rollout; the observed latent-step behavior appears driven more by the task distribution than by the specific model configuration.
 
\paragraph{Ablation on the loss function} 

Recall that CODI is trained with three losses: (i) a \emph{teacher} objective that predicts an explicit CoT / state trace,
(ii) a \emph{student} objective that predicts the final answer after the latent-thought steps, and
(iii) a \emph{feature-space distillation} term that aligns teacher and student representations near the answer boundary.

\emph{Ablating distillation only.}
We first remove the distillation term while keeping the teacher and student losses.
This ablation is still meaningful because the teacher and student share the same backbone and hyperparameters, so the teacher objective continues to shape the shared representation space that the student can leverage.
Empirically, our main mechanistic signatures persist: the model still forms the late intermediate state $s_n$ (and occasionally both $s_{n-1}$ and $s_n$) in the final latent steps, and we still observe a specialized attention head that routes information from the final input token $x_{n+1}$ to the \texttt{[Ans]} boundary.
This suggests that the distillation term is not the primary driver of the partial (late-only) latent rollout in this setting.

\emph{Ablating both distillation and the teacher loss.}
Next, we remove both the distillation term and the teacher objective, leaving only the student loss.
In this case, we no longer observe the late-step reasoning trace: the final latent steps do not reliably encode $s_n$ (or $(s_{n-1},s_n)$).
This indicates that the teacher loss is important for inducing the late mechanism---likely because it pressures the shared backbone to represent stepwise state information, which the student objective can then compress into a short, late-bottleneck computation that still supports accurate answers.

\section{Logit-Lens Visualization on Non-CoT Standard Transformer}
\label{app:logit_len_Non_CoT}

\cref{fig:non_cot_layer35_heads22} shows the logit-lens visualizations for the standard (Non-CoT) Transformer.

\begin{figure}[ht]
  \centering
  \begin{subfigure}[t]{0.49\textwidth}
    \centering
    \includegraphics[width=\linewidth]{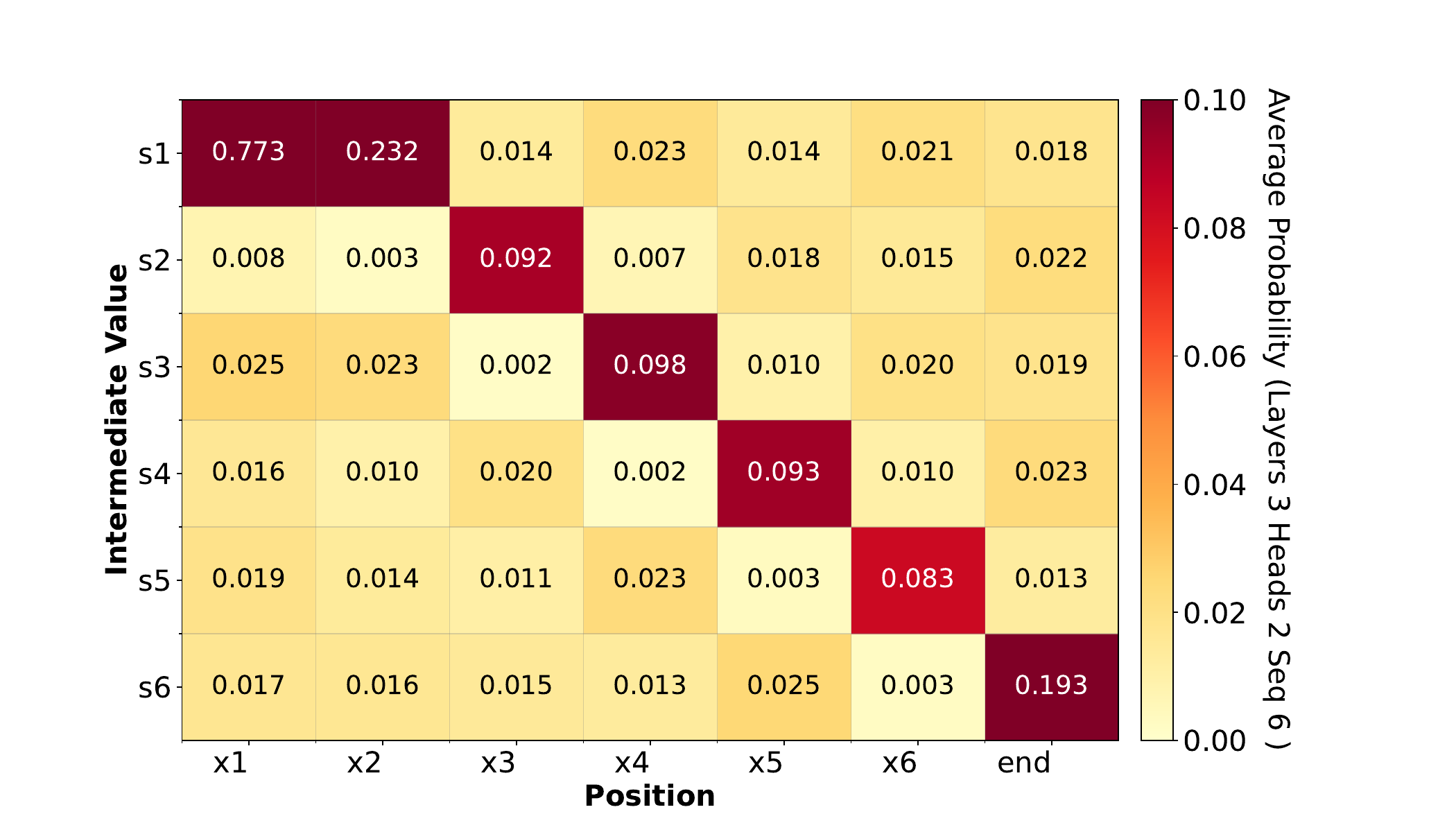}
    \caption{Layer-3, head-2.}
    \label{fig:non_cot_layer3_head2}
  \end{subfigure}\hfill
  \begin{subfigure}[t]{0.49\textwidth}
    \centering
    \includegraphics[width=\linewidth]{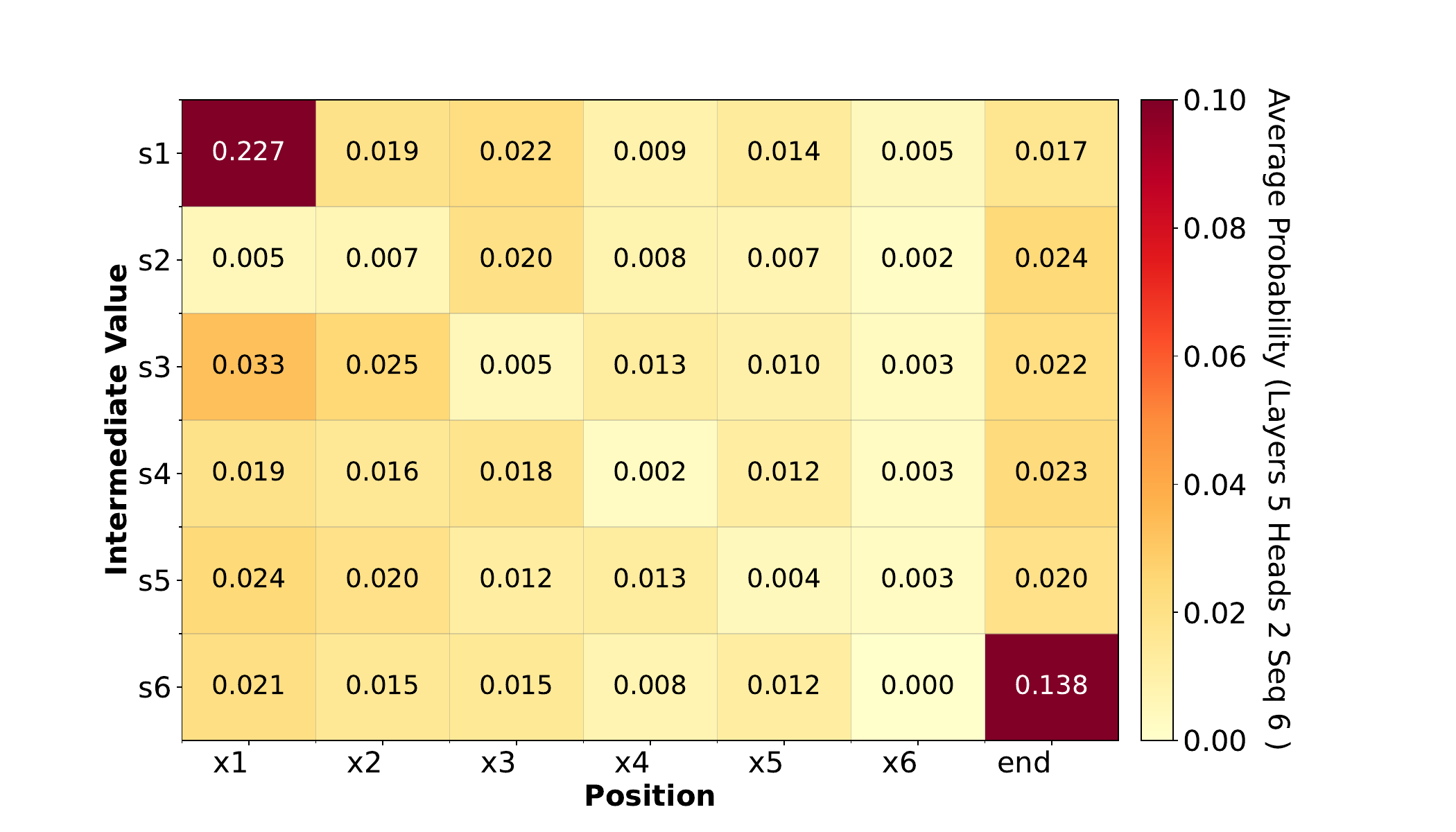}
    \caption{Layer-5, head-2.}
    \label{fig:non_cot_layer5_head2}
  \end{subfigure}

  \caption{\textbf{Logit-lens visualizations for the 5-hop task on all inputs.}
We compare a layer-3/head-2 model to a layer-5/head-2 model. The layer-3/head-2 model exhibits a clear rollout of intermediate states, whereas the layer-5/head-2 model shows little to no intermediate-state trace. This highlights the brittleness of step-by-step reasoning in standard (Non-CoT) Transformers and contrasts with latent-CoT models, which often converge to a late-bottleneck strategy.}

  \label{fig:non_cot_layer35_heads22}
\end{figure}


\end{document}